\newtheorem{claim}{Claim}
\newtheorem*{claim*}{Claim}
\definecolor{emapcolor}{RGB}{230,97,1}
\DeclareMathOperator*{\argmin}{arg\,min}
\newcommand{\titletext}{Does 
my multimodal model learn cross-modal interactions?\\
  It's %
  harder to tell than you might think!}
\newcommand{\mparagraph}[1]{\noindent\textbf{{#1}}.}
\newcommand{\mparagraphnp}[1]{\noindent\textbf{{#1}}}
\newcommand{\emap}{\textsf{EMAP}\xspace}
\newcommand{\emaps}{\textsf{EMAPs}\xspace}
\newcommand{\emaped}{\textsf{EMAPed}\xspace}
\renewcommand\sectionautorefname{\S\@gobble}
\newif\iffinal
\newcommand{\jack}[1]{}
\newcommand{\llee}[1]{}
\newcommand{\jack}[1]{\textcolor{blue}{\textbf{Jack: #1}}}
\newcommand{\llee}[1]{\textcolor{red!55!blue}{\textbf{LL: #1}}}
\newcommand{\tparam}{\tau_i}
\newcommand{\vparam}{\phi_j}
\title{\titletext}
\author{
  Jack Hessel \\
  Allen Institute for AI \\
  {\tt jackh@allenai.org} \\\And
  Lillian Lee \\
  Cornell University \\
  {\tt llee@cs.cornell.edu}
}
\date{}
\begin{document}
\maketitle
\begin{abstract}

Modeling expressive cross-modal interactions 
seems crucial in multimodal tasks,
such as visual question answering.
However, sometimes high-performing black-box algorithms turn out to be
mostly exploiting unimodal signals in the data.
 We propose a new diagnostic tool,
\emph{empirical multimodally-additive function projection} (\emap),
for isolating whether or not cross-modal {interactions} improve
performance for a given model on a given task. This function
projection modifies model predictions so that cross-modal interactions
are eliminated, isolating the additive, unimodal structure. 
For seven \mbox{image+}text classification tasks (on each of which we set new
state-of-the-art benchmarks), we find that, in many cases,
removing cross-modal interactions results in little to no performance
degradation. Surprisingly, this holds even when expressive models, with capacity to consider
interactions, otherwise outperform less expressive models; thus, performance
improvements, even when present, often cannot be attributed to consideration of cross-modal
feature interactions.
We hence recommend
that researchers in multimodal machine learning 
report the performance not only of unimodal baselines, but also
the \emap of their best-performing model.

\end{abstract}

\begin{bibunit}[acl_natbib]
\section{Introduction}

\label{sec:intro}
Given the presumed importance of reasoning across modalities in multimodal
machine learning tasks, we should
evaluate a model's ability to leverage  cross-modal interactions.  
But such evaluation is not straightforward; 
for example, an early Visual Question-Answering (VQA) challenge was later ``broken'' 
by a high-performing method that ignored the image entirely \citep{jabri2016revisiting}.

One response is to create multimodal-reasoning datasets that are specifically 
and cleverly balanced 
to resist language-only or visual-only models; examples are VQA 2.0
\cite{goyal2017making}, NLVR2 \cite{suhr2018corpus}, and GQA
\cite{hudson2018gqa}. However, 
a balancing approach
not always desirable. 
For example, if image+text data is collected from an online social
network (such as for popularity prediction or sentiment analysis),
post-hoc rebalancing may obscure trends in the original data-generating
processs. So,
what alternative diagnostic tools are available for
better understanding what models learn?

\begin{figure}
  \centering
  \includegraphics[width=.95\linewidth]{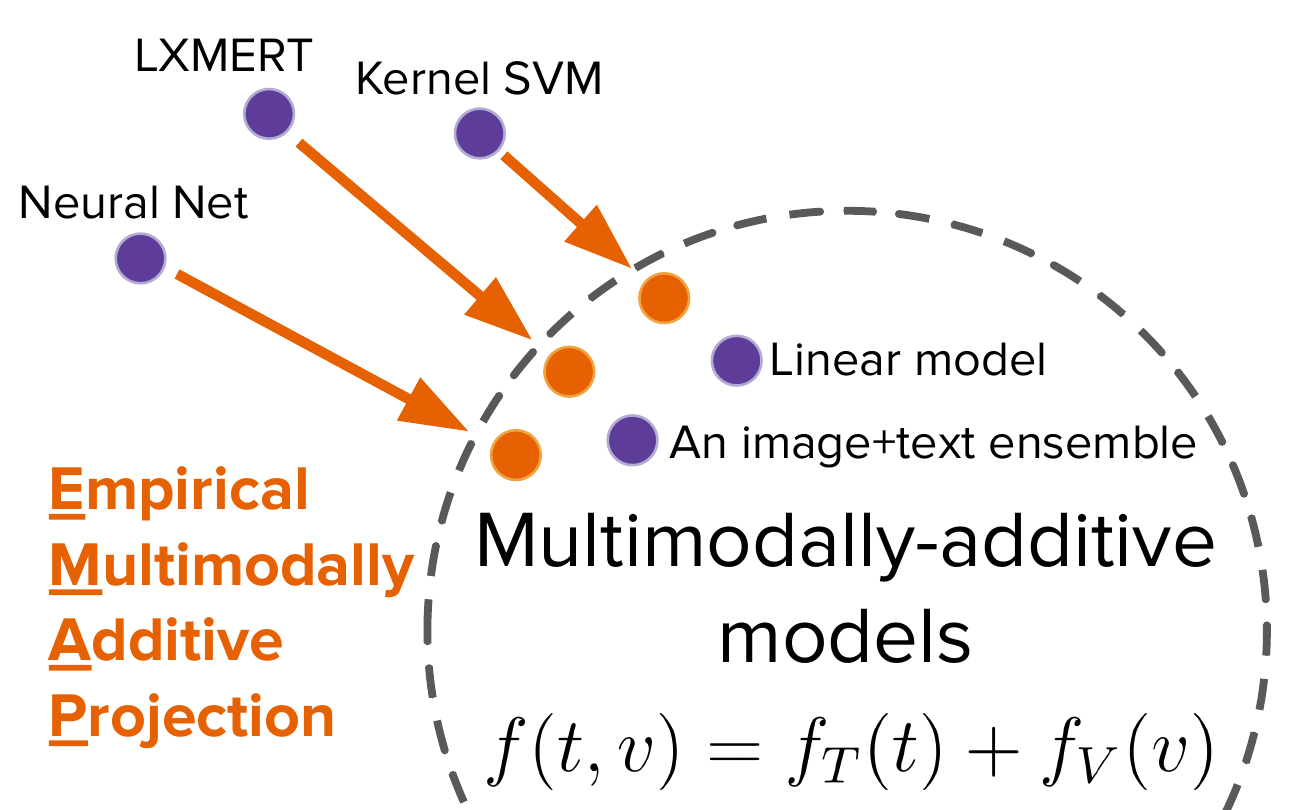}
  \caption{We introduce \emap, a diagnostic for classifiers that take
    in $\underline{t}$extual and $\underline{v}$isual inputs. Given a
    (black-box) trained model, \emap computes the predictions of an
    image/text ensemble that best approximates the full model predictions via
    empirical function projection.
    Although the projected predictions 
    lose visual-textual interaction signals
    exploited by the original model, 
    they often perform suprisingly well.}
  \label{fig:illustration}
\end{figure}

The main tool utilized by prior work
is
\emph{model comparison.} In addition to comparing against text-only
and image-only baselines, often, two multimodal models with differing
representational capacity (e.g., a cross-modal attentional neural
network vs. a linear model) are trained and their performance
compared. The argument commonly made is that if model A, with greater
expressive capacity, outperforms model B, then the performance
differences can be at least partially attributed to that increased
expressivity.

{But is that a reliable argument?} Model performance comparisons
are an opaque tool for analysis, especially for deep neural networks:
performance differences versus baselines, frequently small in
magnitude, can often be attributed to hyperparameter search schemes,
random seeds, the number of models compared, etc. \citep{yogatama2015bayesian,dodge-etal-2019-show}. Thus, while
model comparisons are an acceptable starting point for demonstrating
whether or not a model is learning an interesting set of (or any!)
cross-modal factors, they provide rather indirect evidence.

We propose \emph{\underline{E}mpirical
  \underline{M}ultimodally-\underline{A}dditive\footnote{In
    \autoref{sec:sec_with_emap}, we more formally introduce
    \emph{additivity}.} function \underline{P}rojection} (\emap) as an
additional diagnostic for analyzing multimodal classification
models. Instead of comparing two different models, a single multimodal
classifier's predictions are \emph{projected} onto a less-expressive space: the
result is 
equivalent to 
a set of predictions made by the closest possible ensemble of text-only and
visual-only classifiers.
The projection process is computationally efficient, has no
hyperparameters to tune, can be implemented in a few lines of code, is
provably unique and optimal, and works on any multimodal classifier:
we apply it to models ranging from polynomial kernel SVMs to deep,
pretrained, Transformer-based self-attention models.

We first verify that \emaps do degrade performance for synthetic cases
and for visual question answering cases where datasets have been
specifically designed to require cross-modal reasoning. But we then
examine a test suite of several recently-proposed multimodal
prediction tasks that have \emph{not} been specifically balanced in
this way. We first achieve state-of-the-art performance for all of the
datasets using a linear model.
Next, we examine more expressive interactive models, e.g., pretrained
Transformers, capable of cross-modal attention. While these models
sometimes outperform the linear baseline, \emap reveals that
performance gains are (usually) not due to multimodal
interactions being leveraged.

\mparagraphnp{Takeaways:} For future work on multimodal classification
tasks, we recommend authors report the performance of: 1) unimodal
baselines; 2) any multimodal models they consider; and,
\emph{critically,} 3) the \emph{empirical multimodally-additive
  projection} (\emap) of their best performing multimodal model (see
\autoref{sec:faq} for our full recommendations).

\section{Related Work}

\mparagraph{Constructed multimodal classification tasks}
In addition to image question
answering/reasoning datasets
already mentioned in \autoref{sec:intro}, other multimodal tasks have
been constructed, e.g., video QA \cite{lei2018tvqa,zellers2019vcr},
visual entailment \cite{xie2018visual}, hateful multimodal meme
detection \cite{kiela2020hateful}, and tasks related to visual dialog
\cite{de2017guesswhat}.
In these cases, unimodal baselines are shown to achieve lower
performance relative to their expressive multimodal counterparts.

\mparagraph{Collected multimodal corpora} Recent computational work
has examined diverse multimodal corpora collected from in-vivo social
processes, e.g., visual/textual advertisements
\cite{hussain2017automatic,ye2018advise,zhang2018equal}, images with
non-literal captions in news articles \cite{weiland2018knowledge}, and
image/text instructions in cooking how-to documents
\cite{alikhani2019cite}. In these cases, multimodal classification
tasks are often proposed over these corpora as a means of testing
different theories from semiotics \citep[inter
  alia]{barthes1988image,o1994language,lemke1998multiplying,o2004multimodal};
unlike many VQA-style datasets, they are generally not specifically
balanced to force models to learn cross-modal interactions.

Without rebalancing, should we expect cross-modal interactions to be
useful for these multimodal communication corpora? Some semioticians
posit: \emph{yes!} 
\emph{Meaning multiplication} \cite{barthes1988image} between images
and text suggests, as summarized by \newcite{bateman2014text}:
\par
\begin{quote}
under the right conditions, the value of a
combination of different modes of meaning can be worth more than the
information (whatever that might be) that we get from the modes when
used alone. In other words, text `multiplied by' images is more than
text simply occurring with or alongside images.
\end{quote}
\newcite{jones1979interaction} provide experimental evidence of
conditional, compositional 
{interactions} between image and text
in a humor setting, concluding that ``it is the dynamic interplay
between picture and caption that describes the multiplicative
relationship'' between modalities. Taxonomies of the specific types of
compositional relationships image-text pairs can exhibit have been
proposed \cite{marsh2003taxonomy,martinec2005system}.

\mparagraph{Model Interpretability} In contrast to methods that design
more interpretable algorithms for prediction
\cite{lakkaraju2016interpretable,ustun2016supersparse}, several researchers
aim to explain the behavior of complex, black-box predictors on
individual instances
\cite{kononenko2010efficient,ribeiro2016model}. The most related of
these methods to the present work is \newcite{ribeiro2018anchors}, who
search for small sets of ``anchor'' features that, when fixed, largely
determine a model's output prediction on the input points. While
similar in spirit to ours and other methods that ``control for'' a
fixed subset of features (e.g., \newcite{breiman2001random}), their
work 1) focuses only on high-precision, local explanations on
single instances; 2) doesn't consider multimodal models (wherein
feature interactions are combinatorially more challenging in
comparison to unimodal models); and 3) wouldn't guarantee
consideration of multimodal ``anchors.''

\section{EMAP}%
\label{sec:sec_with_emap}

\mparagraph{Background} We consider models $f$ that assign scores to
textual-visual pairs $(t, v)$, where $t$ is a piece of text (e.g., a
sentence), and $v$ is an image.\footnote{The methods introduced here
  can be easily extended beyond just text/image pairs (e.g., to
  videos, audio, etc.), and to more than two modalites.} In
multi-class classification settings, values $f(t, v) \in \mathbb{R}^d$ 
typically 
serve
as per-class scores.
In ranking settings $f(t_1, v_1)$
may be compared to $f(t_2, v_2)$ via a ranking objective.

We are particularly interested in the types of compositionality that
$f$ uses over its visual and textual inputs to produce scores.
Specifically, we 
distinguish between
\emph{additive} models \cite{hastie1987generalized}
and \emph{interactive} models
\cite{friedman2001greedy,friedman2008predictive}.\footnote{The
  multimedia commonly makes a related distinction between \emph{early
    fusion} (joint multimodal processing) and \emph{late fusion}
  (ensembles) \cite{snoek2005early}.} 
\newcommand\allindices{{\cal I}}
A function $f$ of a 
representation $z$ of
$n$ input features is  \emph{additive} in $I$ if it decomposes as
$$f(z) = f_I(z_I) + f_{\setminus I}(z_{\setminus I}), $$ 
where (setting $\allindices = \{1, \ldots, n\}$ for convenience)
$I \subset \allindices$ indexes a subset of the features,
$\setminus I = \allindices\setminus I$, and for any $S \subset \allindices$, 
$z_S$ is the restriction of $z$ to only those features whose indices appear in $S$. 

In our case,
because features are the union of \emph{\underline{T}}extual and
\emph{\underline{V}}isual predictors, we say that a model is
\emph{multimodally additive} if it decomposes as
\begin{equation}
  f(t, v) = f_T(t) + f_V(v)
  \label{eq:multimodally_additive}
\end{equation}
  
Additive multimodal models are simply ensembles of unimodal
classifiers and as such, may be considered underwhelming to multiple
communities.
A recent ACL paper, for example, refers to 
such ensembles as ``naive.''
On the semiotics side, the
conditionality implied by meaning multiplication
\cite{barthes1988image} --- that the joint semantics of an
image/caption depends non-linearly on its accompaniment --- 
cannot be modeled additively: multimodally additive models posit that
each image, independent of its text pairing, contributes a fixed score
to per-class logits (and vice versa).

In contrast, \emph{multimodally interactive} models are the set of
functions that \emph{cannot} be decomposed 
as in \autoref{eq:multimodally_additive} --- that
is, $f$'s output conditionally depends on its inputs in a
non-additive fashion.

\mparagraph{Machine learning models} One canonical multimodally
additive model is a linear model trained over a concatenation of
textual and visual features 
$[t;v]$, i.e.,
\begin{equation}
f(t, v) = w^T[t;v] + b = \underbrace{w_t^T t}_{f_T(t)} +
\underbrace{w_v^T v + b}_{f_V(v)}.
\label{eq:linear_model}
\end{equation}
We later detail several multimodally interactive models, including
multi-layer neural networks, polynomial kernel SVMs, pretrained
Transformer attention-based models, etc. However, even though
interactive models are \emph{theoretically capable} of modeling a more
expressive set of relationships, it's not clear that they will learn
to exploit this additional expressivity, particularly when simpler
patterns suffice.

\mparagraph{Empirical multimodally-additive projections: \emap} Given
a fixed, trained model $f$ (usually one theoretically capable of modeling
visual-textual interactions) and a set of $N$ labelled 
datapoints $\{(v_i, t_i, y_i)\}_{i=1}^N$, we aim to answer: \emph{does
  f utilize cross-modal feature interactions to produce more accurate
  predictions for these datapoints?}

\begin{algorithm}[t] \small
  \newcommand\preds{{\rm preds}_{\widehat f}}
  \newcommand\proj{{\rm proj}}

  \begin{algorithmic}
    \STATE {\bfseries Input:} a trained model $f$ that outputs logits; a set of text/visual pairs ${\cal D} = \{(t_i, v_i)\}_{i=1}^N$
    \STATE {\bfseries Output:} the predictions of $\hat f$, the empirical 
    projection of $f$ onto the set of multimodally-additive functions, on ${\cal D}$.
    \STATE
    \STATE $f_{cache} = 0_{N\times N\times d}; \preds = 0_{N\times d}$
    \FOR{$i, j \in \{1,2,\dots, N\} \times \{1,2,\dots, N\}$}
    \STATE $f_{cache}(i, j) = f(t_i, v_j)$
    \ENDFOR
    \STATE $\hat{\mu} \in \mathbb{R}^d = \frac{1}{N^2}\sum_{i,j} f_{cache}(i, j)$
    \FOR{$i \in \{1,2,\dots, N\}$}
    \STATE $\proj_t = \frac{1}{N} \sum_{j=1}^N f_{cache}(i, j)$
    \STATE $\proj_v = \frac{1}{N} \sum_{j=1}^N f_{cache}(j, i)$
    \STATE $\preds[i] = \proj_t + \proj_v - \hat{\mu}$
    \ENDFOR
    \STATE {\bfseries return} $\preds$
  \end{algorithmic}
  \caption{{\small Empirical Multimodally-Additive Projection (EMAP); worked example in  \autoref{sec:walkthrough}.}}
  \label{alg:additive_projection}
\end{algorithm}

\newcite{hooker2004discovering} gives a general method for computing
the 
projection of a function $f$ onto a set of
functions with a specified ANOVA structure (see also
\newcite{sobol2001global,liu2006estimating}): our algorithmic
contributions are to extend the method to multimodal models with $d>1$
dimensional outputs, and to prove that the multimodal empirical
approximation is optimal.  Specifically: we are interested in 
$\tilde
f$, the following projection of $f$ onto the set of multimodally-additive
functions:
\begin{equation*} \tilde f(t,v) = \underbrace{\mathop{\mathbb{E}}_v[f(t, v)]}_{f_T(t)}
+ \underbrace{\mathop{\mathbb{E}}_t[f(t, v)]}_{f_V(v)} -
\underbrace{\mathop{\mathbb{E}}_{t,v}[f(t, v)]}_{\mu}
\end{equation*}
where
$\mathop{\mathbb{E}}_v[f(t, v)]$, a function of $t$, is the \emph{partial
dependence} of $f$ on $t$, i.e., $f_T(t) = \mathop{\mathbb{E}}_v[f(t,
  v)] = \int f(t, v) p(v) dv$,\footnote{Both
  \newcite{friedman2001greedy} and \newcite{hooker2004discovering}
  argue that this expectation should be taken over the marginal $p(v)$
  rather than the conditional $p(v|t)$.} 
and similarly for the other expectations.
  An empirical approximation of
the partial dependence function for a given $t_i$ can be computed by
looping over all observations:
$$ \hat f_T(t_i) 
= \frac{1}{N} \sum_{j=1}^N f(t_i,
v_j) \,.$$
We similarly arrive at $\hat f_V(v_i)$ and $\hat \mu$, yielding
\begin{equation}
 \hat f(t_i,v_i) =  \hat f_T(t_i)  +  \hat f_V(v_i) + \hat \mu
 \label{eq:projection}
\end{equation}
which is what \emap, \autoref{alg:additive_projection},  computes for each $(t_i, v_i)$.
Note that \autoref{alg:additive_projection}
involves evaluating the original model $f$ on all $N^2$ $\langle v_i, t_j \rangle$ pairs --- even
mismatched image/text pairs that do not occur in the observed data. 
In
practice, we recommend only computing this projection over the
evaluation set.\footnote{\label{fn:prohibitively_large}
  When even restricting to the evaluation set would be too expensive, as in the case of the R-POP data we experiment with later, one can repeatedly run EMAP on randomly-drawn 500-instance (say) samples from the test set.}  Once the predictions of $f$ and $\hat f$
are computed over the evaluation points, then their performance can be
compared according to standard evaluation metrics, e.g., accuracy or
AUC.

In \autoref{sec:theoretical}, we prove that the
(mean-centered) sum of empirical partial dependence functions is
optimal with respect to squared error, that is:
\begin{claim*}
  Subject to the constraint that $\hat f$
  is multimodally
  additive, \autoref{alg:additive_projection} produces a unique
  and optimal solution to
  \begin{equation}
  \argmin_{\hat f\, {\rm values}} \sum_{i,j} \|f(t_i, v_j) - \hat f(t_i, v_j)\|_2^2.
  \label{eq:loss}
\end{equation}
\end{claim*}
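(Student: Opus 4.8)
The plan is to recognize \eqref{eq:loss} as an orthogonal-projection problem and invoke the projection theorem, rather than grinding through the normal equations by hand. First I would note that the squared Euclidean norm splits coordinatewise, $\|x\|_2^2 = \sum_{k=1}^d x_k^2$, so the objective decouples into $d$ independent scalar problems; hence it suffices to treat the case $d=1$ and abbreviate $f_{ij} := f(t_i, v_j) \in \mathbb{R}$. The feasible set --- the values on the $N\times N$ grid of functions that are multimodally additive in the sense of \eqref{eq:multimodally_additive} --- is exactly
\[
\mathcal{M} = \{\, M \in \mathbb{R}^{N\times N} : M_{ij} = a_i + c_j \text{ for some } a, c \in \mathbb{R}^N \,\},
\]
which is a linear subspace of $\mathbb{R}^{N\times N}$ under the Frobenius inner product $\langle A, B\rangle = \sum_{ij} A_{ij}B_{ij}$. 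Since $\mathbb{R}^{N\times N}$ is finite-dimensional, $\mathcal{M}$ is closed, so \eqref{eq:loss} asks for the orthogonal projection of the matrix $(f_{ij})$ onto $\mathcal{M}$, which the projection theorem guarantees exists and is unique.

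Next I would verify that the matrix $\hat f$ produced by \autoref{alg:additive_projection} is that projection. Writing $\bar f_{i\cdot} = \tfrac1N\sum_j f_{ij}$, $\bar f_{\cdot j} = \tfrac1N\sum_i f_{ij}$, and $\bar f_{\cdot\cdot} = \tfrac1{N^2}\sum_{ij} f_{ij}$, the algorithm sets $\hat f_{ij} = \bar f_{i\cdot} + \bar f_{\cdot j} - \bar f_{\cdot\cdot}$, which is manifestly of the form $a_i + c_j$ (absorb the constant $-\bar f_{\cdot\cdot}$ into either term), so $\hat f \in \mathcal{M}$. By the projection theorem it then remains only to check that the residual $R = (f_{ij}) - \hat f$ is orthogonal to $\mathcal{M}$. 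Because a generic element of $\mathcal{M}$ has entries $a_i + c_j$, the condition $\langle R, M\rangle = 0$ for all such $M$ reduces to the two requirements $\sum_j R_{ij} = 0$ for every $i$ and $\sum_i R_{ij} = 0$ for every $j$; a one-line computation using $\sum_j \bar f_{\cdot j} = N\bar f_{\cdot\cdot}$ and $\sum_j f_{ij} = N\bar f_{i\cdot}$ (and their transposes) shows both the row and column sums of $R$ vanish. This simultaneously establishes optimality and uniqueness of the fitted values, and restoring the coordinate index $k$ recovers the full $d$-dimensional claim.

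The one genuinely delicate point --- and the place I would be most careful --- is the precise meaning of \emph{unique}. The predictions $\hat f_{ij}$ are unique, but their decomposition into $\hat f_T$ and $\hat f_V$ is not: adding a constant $\gamma$ to every $\hat f_T(t_i)$ while subtracting it from every $\hat f_V(v_j)$ leaves all grid values unchanged, so $\mathcal{M}$ is parametrized by $(a,c)$ only up to this one-dimensional gauge freedom. I would therefore phrase the uniqueness conclusion at the level of the projected values (equivalently, of the predictions returned on the diagonal), and observe that the mean-centering by $-\bar f_{\cdot\cdot}$ in \autoref{alg:additive_projection} is merely the canonical representative making $\hat f_T$ and $\hat f_V$ each mean-zero; any other admissible decomposition differs only by the gauge shift and yields identical predictions. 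As an equivalent route that avoids the projection theorem, one can instead set the gradient of the (convex) objective to zero: the stationarity equations $\sum_j(f_{ij} - a_i - c_j) = 0$ and $\sum_i(f_{ij} - a_i - c_j) = 0$ are precisely the vanishing row/column sums above, and solving them reproduces the same $\hat f_{ij}$, with convexity upgrading the stationary point to a global minimizer.
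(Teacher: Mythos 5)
Your proposal is correct, and it reaches the result by a genuinely different route than the paper. The paper works entirely with the parametrization $\hat f(t_i,v_j) = \tparam + \vparam$: it computes the gradient and Hessian $\mathcal{H}$ of the squared loss, proves convexity by showing $z^T\mathcal{H}z = \sum_{i}\sum_{j}(z_i+z_j)^2 \geq 0$, exhibits the algorithm's output as one solution of the normal equations $\mathcal{H}x = b$, then determines that $\mathcal{H}$ has rank $2n-1$ with nullspace spanned by $r = \langle 1,\ldots,1,-1,\ldots,-1\rangle$, so all optima form the one-parameter family $s + cr$; uniqueness of the \emph{predictions} follows because the algorithm reports the sums $\tparam + \vparam$, in which the free constant $c$ cancels. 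You instead view the feasible set as a closed linear subspace $\mathcal{M} \subset \mathbb{R}^{N\times N}$ under the Frobenius inner product and invoke the projection theorem: existence and uniqueness of the minimizing matrix are immediate, and optimality of the algorithm's output reduces to checking membership in $\mathcal{M}$ plus vanishing row and column sums of the residual. The computational core is the same --- your orthogonality conditions are exactly the paper's stationarity equations, as you note --- but the bookkeeping differs: your route gets uniqueness of the fitted values for free and cleanly isolates the gauge freedom as non-uniqueness of the parametrization $(a,c)$ rather than of the projection, whereas the paper's route is more elementary and self-contained (no appeal to the projection theorem) and yields extra structure, namely the full eigendecomposition of $\mathcal{H}$ and the explicit parametric form of every optimal decomposition. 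Your final paragraph on the meaning of uniqueness matches the paper's resolution precisely, and both treatments dispose of the $d>1$ case by the same coordinatewise decoupling.
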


\section{Sanity Check: \emap Hurts 
in Tasks that Require Interaction}

In \S\ref{sec:sec_where_emap_rocks}, we will see that \emap provides a
very strong baseline for ``unbalanced'' multimodal classification
tasks. But first, we first seek to verify that \emap degrades model
performance in cases that are designed to require cross-modal
interactions.

\subsubsection*{Synthetic data}
We generate a set of ``visual''/``textual''/label
data $(v, t, y)$ according to the following process:\footnote{We
  sample 5K points in an 80/10/10 train/val/test split with $\langle d,
  d_1, d_2, \delta \rangle = \langle 100, 2000, 1000, .25 \rangle$,
  though similar results were obtained with different parameter
  settings.}

\begin{small}
  \begin{enumerate}[leftmargin=*,topsep=0pt,itemsep=-1ex,partopsep=1ex,parsep=1ex]
\item Sample random projection $V \in \mathbb{R}^{d_1 \times
  d}$ and $T \in \mathbb{R}^{d_2 \times d}$ 
  from $U(-.5, .5)$.
\item Sample $v, t \in \mathbb{R}^d \sim N(0, 1)$; normalize
  to unit length.
\item If $|v \cdot t| > \delta$ proceed,
  else, return to the previous step.
\item If $v \cdot t > 0$, then $y=1$, else $y=0$.
\item Return the data point $(Vv, Tt, y)$.
\end{enumerate}
\end{small}

This function challenges models to learn whether or not the dot
product of two randomly sampled vectors in $d$ dimensions is positive
or negative --- a task that, by construction, requires modeling the
multiplicative interaction of the features in these vectors. To
complicate the task, the vectors are randomly projected to two
``modalities'' of different dimensions, $d_1$ and $d_2$ respectively. 

We trained a linear model, a polynomial
kernel SVM, and a feed-forward neural network on this data: the
results are in \autoref{tab:synthetic_results}.
The linear model is additive and thus incapable of learning any
meaningful pattern on this data. In contrast, the kernel SVM and
feed-forward NN, interactive models, are able to
fit the test data almost perfectly. However, when we apply \emap to the interactive models,
as expected, their performance drops to random.

\subsubsection*{Balanced VQA Tasks} Our next sanity check is to
verify that \emap hurts the performance of interactive models on two
real multimodal classification tasks that are specifically balanced to
require modeling cross-modal feature interactions: VQA 2.0
\cite{goyal2017making} and GQA \cite{hudson2018gqa}.

First, we fine-tuned LXMERT \cite{tan2019lxmert}, a multimodally-interactive,
pretrained, 14-layer Transformer model (See \autoref{sec:sec_with_models} 
for full description)  that achieves SOTA on both
datasets. 
The LXMERT authors frame question-answering
as a multi-class image/text-pair classification problem --- 3.1K candidate answers
for VQA2, and 1.8K for GQA. 
In Table~\ref{tab:qa_results}, we compare, in cross-validation, the means of: 1) accuracy
of LXMERT, 2) accuracy of the \emap of LXMERT, and 3) accuracy  of simply predicting the most common answer for all
questions. As
expected, \emap decreases accuracy on VQA2/GQA by 30/19 absolute
accuracy points, respectively: this suggests LXMERT is utilizing
feature interactions to produce more accurate predictions on these
datasets. On the other hand, performance of LXMERT's \emap remains
substantially better than constant prediction, suggesting that LXMERT's logits do nonetheless
leverage some unimodal signals in this data.\footnote{\emaped
  LXMERT's performance is comparable to LSTM-based, text-only models,
  which achieve 44.3/41.1 accuracy on the full VQA2/GQA test set,
  respectively.}

\begin{table}
  \centering
  \begin{tabular}{lccc}
    \toprule
    & Linear (A) & Poly (I) & NN (I) \\
    \midrule
    Test Acc & 52.8\% & 99.6\% & 99.0\% \\
    \rotatebox[origin=c]{180}{$\Lsh$} + \emap & 52.8\% & 49.4\% & 53.8\% \\
    \bottomrule
  \end{tabular}
  \caption{Prediction accuracy on synthetic dataset using additive (A)
    models, interactive (I) models, and their \emap
    projections. Random guessing achieves 50\% accuracy. Under
    \emap, the interactive models degrade to (close to) random, 
    as desired.
    See \autoref{sec:sec_with_models} for training details.
    }
  \label{tab:synthetic_results}
\end{table}

\begin{table}
  \centering
  \begin{tabular}{lccc}
    \toprule
    & LXMERT & $\rightarrow$\emap & Const Pred \\
    \midrule
    VQA2 & 70.3 & 40.5 & 23.4 \\
    GQA & 60.3 & 41.0 & 18.1 \\
    \bottomrule
  \end{tabular}
  \caption{As expected, for VQA2 and GQA, the mean accuracy of LXMERT is substantially higher than its empirical multimodally
    additive projection (EMAP). Shown are
    averages over $k=15$ random subsamples of 500 dev-set instances.}
  \label{tab:qa_results}
\end{table}

\section{``Unbalanced'' Datasets + Tasks}

\begin{table*}
  \centering
  \resizebox{.95\textwidth}{!}{
    
\newcommand{\convertto}[2]{#2} %
{\small
\begin{tabular}{lllr}
  \toprule
  Original paper & Task (structure) & Abbrv. & \# image+text  pairs we recovered \\
  \midrule
  \newcite{kruk2019integrating} & Instagram & & \\
  & \rotatebox[origin=c]{180}{$\Lsh$} intent (7-way classification) & I-INT & \convertto{1299}{1.3K} \\
  & \rotatebox[origin=c]{180}{$\Lsh$} semiotic (3-way clf) & I-SEM &  \convertto{1299}{1.3K} \\
  & \rotatebox[origin=c]{180}{$\Lsh$} contextual (3-way clf) & I-CTX &  \convertto{1299}{1.3K} \\
  \newcite{vempala2019categorizing} & Twitter visual-ness (4-way clf) & T-VIS & \convertto{4471}{3.9K} \\
  \newcite{hessel2017cats} & Reddit popularity (pairwise ranking) & R-POP & \convertto{88K}{87.2K} \\
  \newcite{borth2013large} & Twitter sentiment (binary clf) & T-ST1 & \convertto{603}{.6K} \\
  \newcite{MVSA} & Twitter sentiment (binary clf) & T-ST2 & \convertto{4511}{4.0K} \\
  \bottomrule
\end{tabular}
}
  }
\caption{The tasks we consider are not specifically balanced to force the learning of cross-modal interactions.
} 
\label{tab:datasets_and_tasks}
\end{table*}

We now return to our original setting: multimodal classification tasks
that have not been specifically formulated to force cross-modal
interactions. Our goal is to explore what additional insights \emap
can add on top of standard model comparisons.

We consider a suite of 7 tasks, summarized in
Table~\ref{tab:datasets_and_tasks}. These tasks span a wide variety of
goals, sizes, and structures: some aim to classify semiotic
properties of image+text posts, e.g., examining the extent of literal
image/text overlap (I-SEM, I-CTX, T-VIS); others are post-hoc
annotated according to taxonomies of social interest (I-INT, T-ST1,
T-ST2); and one aims to directly predict community response to content
(R-POP).\footnote{
  In \autoref{sec:dataset_details}, we
  include descriptions of our reproduction efforts for each
  dataset/task, but please see the original papers for fuller
  descriptions of the data/tasks.}
In some cases, the original authors emphasize the potentially complex
interplay between image and text: \newcite{hessel2017cats} wonder if ``visual
and the linguistic interact, sometimes reinforcing and sometimes
counteracting each other's individual influence;''
\newcite{kruk2019integrating} discuss meaning multiplication,
emphasizing that ``the text+image integration requires inference that
creates a new meaning;'' and \newcite{vempala2019categorizing}
attribute performance differences between their ``naive'' additive
baseline and their interactive neural model to the notion that ``both
types of information and their interaction are important to this
task.''

\emph{Our goal is not to downplay the importance of these
\underline{datasets and tasks};} it may well be the case that conditional,
compositional interactions occur between images and text, but the
models we consider do not yet take full advantage of them (we return to this point
in \autoref{sec:discussion}).
Our goal, rather, is to provide diagnostic tools that can provide
additional clarity on the \emph{remaining} shortcomings of current models and
reporting schemes.

\subsection*{Additive and Unimodal Models}
\label{sec:sec_with_models}

The additive model we consider is the linear model from \autoref{eq:linear_model} trained over the
concatenation of image and text representations.\footnote{For all linear models in
  this work, we select optimal hyperparameters according to grid
  search, optimizing validation model performance for each
  cross-validation split separately. We optimize: regularization type
  (L1, L2, vs. L1/L2), regularization strength
  (10**(-7,-6,-5,-4,-3,-2,-1,0,1.0)), and loss type (logistic
  vs. squared hinge). We train models using \texttt{lightning}
  \cite{lightning_2016}. } To represent images, we extract a feature
vector from
a pretrained EfficientNet B4\footnote{For reproducibility, we used ResNet-18
  features for the \newcite{kruk2019integrating} datasets;
   more detail in \autoref{sec:reproducibility}.}
\cite{tan2019efficientnet}. To represent text, we extract RoBERTa
\cite{liu2019roberta} token features, and mean pool.\footnote{Feature
  extraction approaches are known to produce competitive results
  relative to full fine-tuning \cite[\S 5.3]{devlin2018bert}; in some
  cases, mean pooling has been found to be similarly competitive
  relative to LSTM pooling \cite{hessel-lee-2019-somethings}.} Our
single-modal baselines are linear models fit over EfficientNet/RoBERTa
features directly.

\begin{table*}
  \centering
  \begin{tabular}{lccccccc}
  \toprule
  & I-INT & I-SEM & I-CTX & T-VIS & R-POP & T-ST1 & T-ST2 \\
  \midrule
  Metric & AUC & AUC & AUC & Weighted F1 & ACC & AUC & ACC \\
  Cross-val Setup & 5-fold & 5-fold & 5-fold & 10-fold & 15-fold & 5-fold & 5-fold  \\
  Constant Pred. & 50.0 & 50.0 & 50.0 & 17.2 & 50.0 & 50.0 & 66.2 \\
  Prev. SOTA & 85.3 & 69.1 & 78.8 & 44 & 62.7 & N/A & 70.5 \\
  \midrule
  Our image-only & 73.6 & 56.5 & 61.0 & 47.2 & 59.1 & 73.3 & 67.2 \\
  Our text-only & 89.9 & 71.8 & \textbf{81.2} & 37.6 & 61.1 & 69.0 & 73.1 \\
  \midrule
  Neural Network (I)
  & {\color{lightgray}90.4} & {\color{lightgray}69.2} & {\color{lightgray}78.5} & {\color{lightgray}51.1} & {\color{lightgray}63.5} & {\color{lightgray}71.1} & {\color{lightgray}79.9} \\
  Polykernel SVM (I)
  & \tikzmark{intemapA}\textbf{91.3}\tikzmark{intintA} & \tikzmark{sememapA}\textbf{74.4}\tikzmark{semintA} & \tikzmark{ctxemapA}\textbf{81.5}\tikzmark{ctxintA} & {\color{lightgray}50.8} & -- & {\color{lightgray}72.1} & \tikzmark{tst2emapA}\textbf{80.9}\tikzmark{tst2intA} \\
  FT LXMERT (I) & {\color{lightgray}83.0} & {\color{lightgray}68.5} & {\color{lightgray}76.3} & {\color{lightgray}\textbf{53.0}} & {\color{lightgray}63.0} & {\color{lightgray}66.4} & {\color{lightgray}78.6} \\
  \rotatebox[origin=c]{180}{$\Lsh$} + Linear Logits (I) & {\color{lightgray}89.9} & {\color{lightgray}73.0} & {\color{lightgray}80.7} & \tikzmark{tvisemapA}\textbf{53.4}\tikzmark{tvisintA} & \tikzmark{rpopemapA}\textbf{64.1}\tikzmark{rpopintA} & \tikzmark{tst1emapA}\textbf{75.5}\tikzmark{tst1intA} & {\color{lightgray}80.3} \\
  \midrule
  Linear Model (A)
  & 90.4 & 72.8 & 80.9 & 51.3 & 63.7 & \textbf{75.6} & 76.1 \\
  Our Best Interactive (I)
  & \textbf{91.3}\tikzmark{intintB} & \textbf{74.4}\tikzmark{semintB} & \textbf{81.5}\tikzmark{ctxintB} & \textbf{53.4}\tikzmark{tvisintB} & \emph{\textbf{64.2}}\tikzmark{rpopintB} & \textbf{75.5}\tikzmark{tst1intB} & \textbf{80.9}\tikzmark{tst2intB} \\

  {\color{emapcolor} \rotatebox[origin=c]{180}{$\Lsh$} + \emap (A)}
  &\tikzmark{intemapB}{\color{emapcolor}\textbf{91.1}} & \tikzmark{sememapB}{\color{emapcolor}\textbf{74.2}} & \tikzmark{ctxemapB}{\color{emapcolor}\textbf{81.3}} & \tikzmark{tvisemapB}{\color{emapcolor}51.0} & \tikzmark{rpopemapB}{\color{emapcolor}\emph{\textbf{64.1}}}& \tikzmark{tst1emapB}{\color{emapcolor}\textbf{75.9}} & \tikzmark{tst2emapB}{\color{emapcolor}\textbf{80.7}}\\
  \bottomrule
\end{tabular}

\begin{tikzpicture}[overlay, remember picture, shorten >=5pt, shorten <=4pt, bend angle=25, line width=.4mm]
  \draw [->] ([yshift=.13cm]{pic cs:intintA}) [bend left] to ([yshift=.1cm]{pic cs:intintB});
  \draw [emapcolor,->] ([yshift=.13cm]{pic cs:intemapA}) [bend right] to ([yshift=.1cm]{pic cs:intemapB});
  
  \draw [->] ([yshift=.13cm]{pic cs:semintA}) [bend left] to ([yshift=.1cm]{pic cs:semintB});
  \draw [emapcolor,->] ([yshift=.13cm]{pic cs:sememapA}) [bend right] to ([yshift=.1cm]{pic cs:sememapB});

  \draw [->] ([yshift=.13cm]{pic cs:ctxintA}) [bend left] to ([yshift=.1cm]{pic cs:ctxintB});
  \draw [emapcolor,->] ([yshift=.13cm]{pic cs:ctxemapA}) [bend right] to ([yshift=.1cm]{pic cs:ctxemapB});

  \draw [->] ([yshift=.13cm]{pic cs:tvisintA}) [bend left] to ([yshift=.1cm]{pic cs:tvisintB});
  \draw [emapcolor,->] ([yshift=.13cm]{pic cs:tvisemapA}) [bend right] to ([yshift=.1cm]{pic cs:tvisemapB});

  \draw [->] ([yshift=.13cm]{pic cs:rpopintA}) [bend left] to ([yshift=.1cm]{pic cs:rpopintB});
  \draw [emapcolor,->] ([yshift=.13cm]{pic cs:rpopemapA}) [bend right] to ([yshift=.1cm]{pic cs:rpopemapB});

  \draw [->] ([yshift=.13cm]{pic cs:tst1intA}) [bend left] to ([yshift=.1cm]{pic cs:tst1intB});
  \draw [emapcolor,->] ([yshift=.13cm]{pic cs:tst1emapA}) [bend right] to ([yshift=.1cm]{pic cs:tst1emapB});

  \draw [->] ([yshift=.13cm]{pic cs:tst2intA}) [bend left] to ([yshift=.1cm]{pic cs:tst2intB});
  \draw [emapcolor,->] ([yshift=.13cm]{pic cs:tst2emapA}) [bend right] to ([yshift=.1cm]{pic cs:tst2emapB});
\end{tikzpicture}

  \caption{Prediction results for 7 multimodal classification
    tasks. First block: the evaluation metric, setup, constant
    prediction performance, and previous state-of-the-art results (we
    outperform these baselines mostly because we use RoBERTa). Second
    block: the performance of our image only/text only linear models.
    Third block: the predictive performance of our
    \textbf{(I)}nteractive models. Fourth block: comparison of the performance of
    the best \textbf{(I)}nteractive model to the \textbf{(A)}dditive
    linear baseline. Crucially, we also report the
    {\color{emapcolor}\emap of the best interactive model,} which
    reveals whether or not the performance gains of the
    \textbf{(I)}nteractive model are due to modeling cross-modal
    interactions, or not. Italics=computed using 15 fold
    cross-validation over each cross-validation split (see
    footnote~\ref{fn:prohibitively_large}). Bolded values are within
    half a point of the best model.}
  \label{tab:pred_results}
\end{table*}

\subsection*{Interactive Models}

\mparagraph{Kernel SVM} We train an SVM with a polynomial kernel using
RoBERTa text features and EfficientNet-B4 image features as
input. A polynomial kernel endows the model with capacity to model
multiplicative interactions between features.\footnote{We again use
  grid search to optimize: polynomial kernel degree (2 vs.~3),
  regularization strength (10**(-5,-4,-3,-2,-1,0)), and gamma (1, 10,
  100).}

\mparagraph{Neural Network} We train a feed-forward neural network
using the RoBERTa/EfficientNet-B4 features as input. Following
\newcite{chen2016enhanced}, we first project text and image features
via an affine transform layer to representations $t$ and $v$,
respectively, of the same dimension. Then, we extract new features,
feeding the concatenated feature vector $[t; v; v-t; v \odot t]$ to a
multi-layer, feed-forward network.\footnote{Parameters are optimized
  with the Adam optimizer \cite{kingma2014adam}. We decay the learning
  rate when validation loss plateaus. The hyperparameters optimized in
  grid search are: number of layers (2, 3, 4), initial learning rate
  (.01, .001, .0001), activation function (relu vs.~gelu), hidden
  dimension (128, 256), and batch norm (use vs.~don't).}

\mparagraph{Fine-tuning a Pretrained Transformer} We fine-tuned LXMERT
\cite{tan2019lxmert} for our tasks. LXMERT represents images using 36
predicted bounding boxes, each of which is assigned a feature vector
by a Faster-RCNN model \cite{ren2015faster,anderson2018bottom}. This
model uses ResNet-101 \cite{he2016deep} as a backbone and is
pretrained on Visual Genome \cite{krishna2017visual}. Bounding box
features are fed through LXMERT's 5-layer Transformer
\cite{vaswani2017attention} model. Text is processed by a 9-layer
Transformer. Finally, a 5-layer, cross-modal transformer processes
the outputs of these unimodal encoders jointly, allowing for feature
interactions to be learned. LXMERT's parameters are pre-trained on
several cross-modal reasoning tasks, e.g., masked image
region/language token prediction, cross-modal matching, and visual
question answering. LXMERT achieves high performance on balanced tasks
like VQA 2.0: thus, we \emph{know} this model can learn compositional,
cross-modal interactions in some settings.\footnote{We follow the
  original authors' fine-tuning recommendations, but also optimize the
  learning rate according to validation set performance for each
  cross-validation split/task separately between 1e-6, 5e-6, 1e-5,
  5e-5, and 1e-4.}

\mparagraphnp{LXMERT + logits:} We also trained versions of LXMERT
where fixed logits from a pretrained linear model (described above)
are fed in to the final classifier layer. In this case, LXMERT is only
tasked with learning the residual between the strong additive model
and the labels. Our intuition was that this augmentation might enable
the model to focus more closely on learning interactive, rather than
additive, structure in the fine-tuning process.

\subsection{Results}
\label{sec:sec_where_emap_rocks}
Our main prediction results are summarized in
\autoref{tab:pred_results}. For all tasks, the performance of our
baseline additive linear model is strong, but we are usually able to
find an interactive model that outperforms this linear baseline, e.g.,
in the case of T-ST2, a polynomial kernel SVM outperforms the linear
model by 4 accuracy points. This observation alone \emph{seems} to
provide evidence that models are taking advantage of some cross-modal
interactions for performance gains. Previous analyses might conclude
here, arguing that cross-modal interactions are being utilized by the
model meaningfully. \emph{But is this necessarily the case?}

We utilize \emap as an additional model diagnostic by projecting the
predictions of our best-performing interactive models. Surprisingly,
for I-INT, I-SEM, I-CTX, T-ST1, T-ST2, and R-POP, \emap results in
\emph{essentially no performance degradation.} Thus, even (current)
expressive interactive models are usually unable to leverage
cross-modal feature interactions to improve performance. This
observation would be obfuscated without the use of additive
projections, even though we compared to a strong linear baseline that
achieves state-of-the-art performance for each dataset. \emph{This
  emphasizes the importance of not only comparing to additive/linear
  baselines, but also to the \emap of the best performing model.}

In total, for these experiments, we observe a single case, LXMERT +
Linear Logits trained on T-VIS, wherein modeling cross-modal feature
interactions appears to result in noticeable performance increases ---
here, the \emaped model is 2.4 F1 points worse.

\mparagraphnp{How much does \emap change a model's predictions?} For
these datasets, a model and its \emap usually make very similar
predictions. For all datasets except T-VIS (where \emap degrades
performance) the best model and its \emap agree on the most likely
label in more than 95\% of cases on average. For comparison,
retraining the full models with different random seeds results in a
96\% agreement on average. (Full results are in \autoref{sec:just_regularizers}.)

\section{Implication FAQs}
\label{sec:faq}

\mparagraphnp{Q1: What is your recommended experimental workflow for future
  work examining multimodal classification tasks?}

\noindent \textbf{A:} With respect to automated classifiers, we recommend reporting the performance of:
\begin{enumerate}[leftmargin=*,topsep=3pt,itemsep=-.5ex,partopsep=0ex,parsep=1ex]
\item A constant and/or random predictor

  \vspace{-.1cm}
  {\raggedleft \emph{to provide perspective on the interplay between the label distribution and the evaluation metrics.}\par}
  
\item As-strong-as-possible single-modal models $f(t,v) = g(t)$ and $f(t,v) = h(v)$

  \vspace{-.1cm}
  {\raggedleft \emph{to understand how well the task can be addressed unimodally with present techniques.}\par}
    
\item An as-strong-as-possible multimodally additive model $f(t, v) = f_T(t) + f_V(v)$

  \vspace{-.1cm}
  {\raggedleft \emph{to understand multimodal model performance without access to sophisticated cross-modal reasoning capacity.} \par}
  
\item An as-strong-as-possible multimodally interactive model, e.g., LXMERT,

  \vspace{-.1cm}
  {\raggedleft \emph{to push predictive performance as far as possible.}\par}
  
\item The \emap of the strongest multimodally interactive model.

  \vspace{-.1cm}
  {\raggedleft \emph{to determine whether or not the best interactive model is truly using cross-modal interactions to improve predictive performance.}\par}
  
\end{enumerate}

\noindent We hope this workflow can be extended with additional, newly
developed model diagnostics going forward.

\mparagraphnp{Q2: Should papers proposing new tasks be rejected if
  image-text interactions aren't shown to be useful?}

\noindent \textbf{A:} \emph{Not necessarily.} The 
value
of a newly proposed task should not depend solely on how well current
models perform on it. 
Other valid ways
to demonstrate dataset/task efficacy: human experiments, careful
dataset design inspired by prior work, or real-world use-cases.

\mparagraphnp{Q3: Can \emap tell us anything fundamental about the type of
  reasoning required to address different tasks themselves?}

\noindent \textbf{A:} \emph{Unfortunately, no more than model
  comparisons can (at least for real datasets).} \emap is a tool that,
like model comparison, provides insights about how specific, fixed
models perform on specific, fixed datasets. In FAQ 5, we attempt to
bridge this gap in a toy setting where we are able to fully enumerate
the sample space.

\mparagraphnp{Q4: Can \emap tell us anything about individual instances?}

\noindent \textbf{A:} \emph{Yes; but with caveats.} While a model's
behavior on individual cases is difficult to draw conclusions from,
\emap can be used to identify single instances within evaluation
datasets for which the model is performing enough non-additive
computation to change its ultimate prediction, i.e., for a given $(t,
v)$, it's easy to compare $f(t, v)$ to $\emap(f(t,v))$: these
correspond to the inputs and outputs respectively of
\autoref{alg:additive_projection}. An example instance-level
qualitative evaluation of T-VIS is given in
\autoref{sec:qualitative_tvis}.

\mparagraphnp{Q5: 
Do the strong \emap results imply that most functions of interest are
actually multimodally additive?} 
\label{sec:discussion}

\noindent \textbf{Short A:} \emph{No.}

\noindent \textbf{Long A:} A skeptical reader
might argue that, while multimodally-additive models cannot account
for cross-modal feature interactions, such feature interactions may
not be required for cross-modal reasoning. While we authors are not
aware of an agreed-upon definition,\footnote{
  \newcite{suhr2018corpus}, for example, do not 
  define
  ``visual reasoning with natural language,'' but do argue that
  some tasks offer a promising avenue to study ``reasoning that
  requires composition'' via visual-textual grounding.} we will assume
that ``cross-modal reasoning tasks'' are those that challenge models
to compute (potentially arbitrary) logical functions of multimodal
inputs. Under this assumption, we show that additive models cannot fit
(nor well-approximate) most non-trivial logical functions.

Consider the case of multimodal boolean target functions $f(t, v) \in
\{0, 1\}$, and assume our image/text input feature vectors each
consist of $n$ binary features, i.e., $t, v = \langle t_1, \ldots t_n
\rangle, \langle v_1, \ldots v_n \rangle$ where $t_i, v_i \in \{0,
1\}$. Our goal will be to measure how well multimodally additive
models can fit arbitrary logical functions $f$.  To simplify our
analysis in this idealized case: we assume 1) access to a training set
consisting of all $2^{2n}$ input vector pairs, and only measure
training accuracy (vs.  the harder task of generalization) and 2) ``perfect'' unimodal representations in the sense that $t, v$
contain all of the information required to compute $f(t, v)$ (this is
not the case for, e.g., CNN/RoBERTa features for actual tasks).

For very small cross-modal reasoning tasks, additive models can 
suffice. At $n=1$ , there are 16 possible binary functions
$f(t_1, v_1)$, and 14/16 can be perfectly fit by a function of the
form $f_T(t_1) + f_V(v_1)$ (the exceptions being {\sf XOR} and {\sf
  XNOR}). For $n=2$, non-trivial functions are still often multimodally
additively representable; an arguably surprising example is this one:
\resizebox{0.95\hsize}{!}{%
${\displaystyle (t2 \land \lnot v2) \lor (t1 \land t2 \land v1) \lor
    (\lnot t1 \land \lnot v1 \land \lnot v2)}.$
}
But for cross-modal reasoning tasks with more variables,\footnote{As a
  lower bound on the number of variables required in a more realistic
  case, consider a cross-modal reasoning task where questions are
  posed regarding the counts of, say, 1000 object types in
  images. It's likely that a separate image variable representing the
  count of each possible type of object would be required. While an
  oversimplification, for most cross-modal reasoning tasks, $2n=6$
  variables is still relatively small.} even in this toy setting,
multimodally-additive models ultimately fall short: for the $n=3$
case, we sampled over 5 million random logical circuits and none were
perfectly fit by the additive models. To better understand how well
additive models can \emph{approximate} various logical functions, we
fit two types of them for the $n>2$ case: 1) the \emap of the input
function $f$ directly, and 2) AdaBoost \citep{freund1995desicion} with the weak learners
restricted to unimodal models.\footnote{AdaBoost is chosen because it
  has strong theoretical guarantees to fit to training data: 
  see \autoref{sec:adaboost}.} For a reference interactive model, we
employ AdaBoost without the additivity restriction.
\begin{wrapfigure}{r}{0.4\linewidth}
  \begin{center}
    \includegraphics[width=0.95\linewidth]{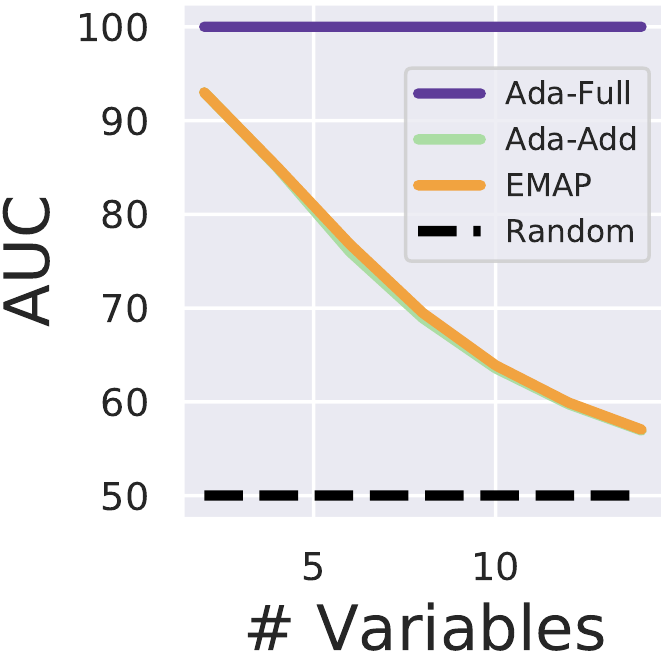}
  \end{center}
  \caption{Perf. of additive models on fitting logical circuits.}
  \label{fig:logic_curve}
\end{wrapfigure}

\autoref{fig:logic_curve} plots the AUC of the 3 models on the
training set; we sample 10K random logical circuits for each problem
size.  As the number of variables increases, the performance of the
additive models quickly decreases (though full AdaBoost gets 100\%
accuracy in all cases). While these experiments are only for a toy
case, they show that \emaps, and additive models generally, have very
limited capacity to compute or approximate logical functions of
multimodal variables.

\section{Conclusion and Future Work}

The last question on our FAQ list in \autoref{sec:faq} leaves us
with the
following conundrum: 1) Additive models are incapable of most
cross-modal reasoning; but 2) for most of the unbalanced tasks we
consider, \emap finds an additive approximation that makes nearly
identical predictions to the full, interactive model. We postulate the
following potential explanations, pointing towards future work:

\begin{itemize}[leftmargin=*,topsep=0pt,itemsep=-1ex,partopsep=1ex,parsep=1ex]
\item Hypothesis 1: {These unbalanced tasks don't require complex
  cross-modal reasoning.} This purported conclusion cannot account for
  gaps between human and machine performance: if an additive model
  underperforms relative to human judgment, the gap could plausibly be
  explained by cross-modal feature interactions. But even in cases
  where an additive model matches or exceeds human performance on a
  fixed dataset, additive models may still be insufficient. The mere
  fact that unimodal and additive models \emph{can} often be disarmed
  by adding valid (but carefully selected) instances post hoc (as in,
  e.g., \newcite{kiela2020hateful}) suggests that their inductive bias
  can simultaneously be sufficient for train/test generalization, but
  also fail to capture the spirit of the task. Future work would be
  well-suited to explore 1) %
  methods for better understanding which datasets (and individual
  instances) can be rebalanced and which cannot; and 2) the
  non-trivial task of estimating additive \emph{human} baselines to
  compare against.

\item Hypothesis 2: {Modeling feature interactions can be data-hungry.}
  \newcite{Jayakumar2020Multiplicative} show that feed-forward neural
  networks can require a very high number of training examples to
  learn feature interactions. So, we may need models with different
  inductive biases and/or much more training data. Notably, the
  feature interactions learned even in balanced cases are often not
  interpretable \cite{subramaniananalyzing}.

\item Hypothesis 3: {Paradoxically, unimodal models may be too weak.} Without
  expressive enough single-modal processing methods, opportunities for
  learning cross-modal interaction patterns may not be present during
  training. So, improvements in unimodal modeling could feasibly
  improve feature interaction learning.
  
\end{itemize}

\mparagraphnp{Concluding thoughts.}  Our hope is that future work on
multimodal classification tasks report not only the predictive
performance of their best model + baselines, but also the \emap of
that model. \emap (and related algorithms) has practical implications
beyond image+text classification: there are straightforward extensions
to non-visual/non-textual modalities, to classifiers using more than
2 modalities as input, and to single-modal cases where one wants to
check for feature interactions between two groups of features, e.g.,
premise/hypothesis in NLI.

\section*{Acknowledgments} In addition to the anonymous reviewers and meta-reviewer, the authors would like to
thank Ajay Divakaran, Giles Hooker, Jon Kleinberg, Julia Kruk, Tianze
Shi, Ana Smith, Alane Suhr, and Gregory Yauney for the helpful
discussions and feedback. We thank the NVidia Corporation for
providing some of the GPUs used in this study. 
JH performed this work while at Cornell University.
Partial support for this work
was kindly provided by a grant from The Zillow Corporation, and a
Google Focused Research Award. 
Any opinions, findings, conclusions, or recommendations
expressed here are those of the authors and do not 
necessarily reflect the view of the sponsors.

\putbib[refs]
\end{bibunit}

\clearpage
\begin{bibunit}[acl_natbib]
\appendix
\section{Theoretical Analysis of \autoref{alg:additive_projection}}
\label{sec:theoretical}

We assume we are given a (usually evaluation) dataset $D = \{ \langle
t_i, v_i \rangle \}_{i=1}^n$ and a trained model $f$ that maps
$\langle t_i, v_i \rangle$ pairs to a $d$-dimensional vector of
scores. We seek a multimodally-additive function $\hat f$ that matches
the values of $f$ on any $\langle t_i, v_j \rangle$ for which there
exist $v', t'$ such that $\langle t_i, v' \rangle \in D$ and $\langle
t', v_j \rangle \in D$; that is, $\langle t_i, v_j \rangle$ represents
any text-image pair we could construct if we decoupled the existing
pairs in $D$.\footnote{Note that multimodally-additive models do not
  rely on particular $t_i, v_j$ couplings, as this family of functions
  decomposes as $f(t_i, v_j) = f_t(t_i) + f_v(v_j)$; thus, a
  multimodally-additive $\hat f$ \emph{should} be able to fit any
  image-text pair we could construct from $D$, not just the
  image-text pairs we observe.}

For simplicity, first assume that $d=1$ (we handle the $d>1$ case
later). Since $\hat f$ is multimodally-additive by assumption,
$\exists \, \hat f_t, \hat f_v$ such that $\hat f(t_i, v_j) = \hat
f_t(t_i) + \hat f_v(v_j)$. Our goal is to find the ``best'' $2n$
values $\hat f_t(t_i), \hat f_v(v_j)$, or --- writing $f_{ij}$ for
$f(t_i, v_j)$, $\tparam$ for $\hat f_t(t_i)$, and $\vparam$ for $\hat
f_v(v_j)$ for notational convenience --- to find $\tparam, \vparam$
minimizing
\begin{equation}
  \mathcal{L} = \frac{1}{2} \sum_i \sum_j (f_{ij} - \tparam - \vparam)^2
  \label{eq:loss_appendix}
\end{equation}

\begin{claim}
  $\mathcal{L}$ is convex.
\end{claim}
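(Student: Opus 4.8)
The plan is to recognize $\mathcal{L}$ as a nonnegative combination of squared affine functions and invoke the standard closure properties of convexity. Treat the $2n$ quantities $\tau_1,\dots,\tau_n,\phi_1,\dots,\phi_n$ as the optimization variables and regard each $f_{ij}$ as a fixed constant. Then every summand $(f_{ij} - \tau_i - \phi_j)^2$ is the composition of the convex univariate map $s \mapsto s^2$ with the affine map $(\tau,\phi) \mapsto f_{ij} - \tau_i - \phi_j$. Since composing a convex function with an affine map preserves convexity, and since $\mathcal{L}$ is the sum of these terms scaled by the positive constant $\tfrac12$, convexity of $\mathcal{L}$ follows immediately from the fact that nonnegative linear combinations of convex functions are convex.

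If a self-contained, first-principles argument is preferred over citing closure properties, I would instead exhibit the Hessian directly. Differentiating twice yields a $2n \times 2n$ matrix $H$ with the block form $\left(\begin{smallmatrix} nI & J \\ J & nI \end{smallmatrix}\right)$, where $I$ is the $n\times n$ identity and $J$ the all-ones matrix: the diagonal blocks come from $\partial^2\mathcal{L}/\partial\tau_i^2 = \partial^2\mathcal{L}/\partial\phi_j^2 = n$, and the off-diagonal block from $\partial^2\mathcal{L}/\partial\tau_i\partial\phi_j = 1$. It then suffices to check $H \succeq 0$: for any $a,b \in \mathbb{R}^n$ the quadratic form equals $n\|a\|^2 + 2AB + n\|b\|^2$ with $A = \sum_i a_i$ and $B = \sum_j b_j$, and since Cauchy--Schwarz gives $n\|a\|^2 \ge A^2$ and $n\|b\|^2 \ge B^2$, this is bounded below by $A^2 + 2AB + B^2 = (A+B)^2 \ge 0$.

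Because $\mathcal{L}$ is a finite quadratic, there is essentially no analytic obstacle here; the only real care needed is bookkeeping in the second-derivative computation. The mild subtlety worth flagging now, and arguably the reason to isolate this as a separate claim, is that $\mathcal{L}$ is convex but \emph{not} strictly convex: adding a constant $c$ to every $\tau_i$ and subtracting $c$ from every $\phi_j$ leaves $\mathcal{L}$ unchanged, so $H$ is singular, with the direction $(\mathbf{1},-\mathbf{1})$ lying in its kernel. This gauge freedom is precisely what forces the subsequent optimality and uniqueness argument to fix the degree of freedom (e.g., through the mean-centering $\hat\mu$ term), so recording it here should streamline the proof that the stationarity conditions pin down $\hat f$ uniquely.
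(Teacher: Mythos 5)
Your proposal is correct, and your primary argument takes a genuinely different route from the paper. The paper proves convexity by direct computation: it writes out the first-order partials, assembles the Hessian $\mathcal{H} = \left(\begin{smallmatrix} nI & \mathbf{1} \\ \mathbf{1} & nI \end{smallmatrix}\right)$ (with $\mathbf{1}$ the all-ones $n \times n$ block), and verifies positive semi-definiteness by expanding $z^T \mathcal{H} z$ and regrouping it into the sum of squares $\sum_{i=1}^{n}\sum_{j=n+1}^{2n}(z_i+z_j)^2 \ge 0$. Your first argument---each summand is the convex map $s \mapsto s^2$ composed with an affine function of $(\tau,\phi)$, and nonnegative combinations of convex functions are convex---is cleaner and needs no computation at all. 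What the paper's heavier route buys is the explicit matrix $\mathcal{H}$ itself, which is not a throwaway: the subsequent claims reuse $\mathcal{H}$ as the coefficient matrix of the stationarity system and analyze its eigenvalues and nullspace to characterize all optima, so the paper obtains that object as a byproduct of the convexity proof. Your fallback Hessian argument matches the paper's structure exactly, differing only in the final inequality, where you invoke Cauchy--Schwarz ($n\|a\|^2 \ge A^2$, $n\|b\|^2 \ge B^2$, hence the form is at least $(A+B)^2 \ge 0$) instead of the paper's term-by-term regrouping; both are valid. Finally, your observation that $\mathcal{H}$ is singular with $(\mathbf{1},-\mathbf{1})$ in its kernel is precisely the paper's later nullspace vector $r$, so recording it here correctly anticipates why the uniqueness argument must work with the sum $\tau_i + \phi_j$ rather than the individual parameters.
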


\begin{proof}
  The first-order partial derivatives are:
  $$\frac{\partial \mathcal{L}}{\partial \tparam} = n \cdot \tparam + \sum_j (\vparam - f_{ij}) $$
  $$\frac{\partial \mathcal{L}}{\partial \vparam} = n \cdot \vparam + \sum_i (\tparam - f_{ij}) $$
  and the Hessian $\mathcal{H}$ is

  $$
  \mathcal{H} =
  \begin{bmatrix}
    n I & \mathbf{1}  \\
    \mathbf{1} & n I
  \end{bmatrix} , \;\; I, \mathbf{1} \in \mathbb{R}^{n \times n}
  $$
  It suffices to show that $\mathcal{H}$ is positive semi-definite, i.e.,
  for any $z \in \mathbb{R}^{2n}$, $z^T \mathcal{H} z \geq 0$. Indeed,

  \begin{align*}
  z^T \mathcal{H} z &= z^T \begin{bmatrix}
    n z_1 + \sum\limits_{j=n+1}^{2n} z_j  \\
    n z_2 + \sum\limits_{j=n+1}^{2n} z_j  \\
    \vdots \\
    n z_n + \sum\limits_{j=n+1}^{2n} z_j  \\
    n z_{n+1} + \sum\limits_{i=1}^n z_i  \\
    n z_{n+2} + \sum\limits_{i=1}^n z_i  \\
    \vdots \\
    n z_{2n} + \sum\limits_{i=1}^n z_i  \\
  \end{bmatrix} \\
  &= \sum\limits_{i=1}^n \left( nz_i^2 + \sum\limits_{j=n+1}^{2n} z_iz_j \right) \\
  &+ \sum\limits_{j=n+1}^{2n} \left( n z_j^2 + \sum\limits_{i=1}^n z_jz_i \right) \\
  &= \sum\limits_{i=1}^n \sum\limits_{j=n+1}^{2n} \left(z_i^2 + 2z_iz_j + z_j^2 \right) \\
  &= \sum\limits_{i=1}^n \sum\limits_{j=n+1}^{2n} (z_i + z_j)^2 \geq 0
  \end{align*}
\end{proof}

Now, for the optimal solutions to our minimization problem, we can set
the first-order partial derivatives to 0 and solve for our $2n$
parameters $\tparam, \vparam$. These solutions will correspond to global
minima due to the convexity result we established above. It turns out
to be equivalent to find solutions to:
\begin{equation}
  \mathcal{H}  \begin{bmatrix}
  \tau_1  \\
  \vdots \\
  \tau_n \\
  \phi_1  \\
  \vdots \\
  \phi_n
  \end{bmatrix} =
  \begin{bmatrix}
  \sum\limits_{k=1}^n f_{1k}  \\
  \vdots \\
  \sum\limits_{k=1}^n f_{nk} \\
  \sum\limits_{k=1}^n f_{k1}  \\
  \vdots \\
  \sum\limits_{k=1}^n f_{kn}
  \end{bmatrix}
  \label{eq:system_of_eq}
\end{equation}

\noindent We can do this by finding one solution $s$ to the
above, and then analyzing the nullspace of $\mathcal{H}$,
which will turn out to be the 1-dimensional subspace
spanned by
$$ r=\langle \underbrace{1, 1, \ldots 1,}_{n}\, \underbrace{-1, -1,
  \ldots -1}_n \rangle$$

That $\mathcal{H}r = 0$ can be verified by direct calculation.  Then,
all solutions will have the form $s+cr$ for any $c \in
\mathbb{R}$.\footnote{Proof: Assume that $s'$ is a solution of
  \autoref{eq:system_of_eq}.  $s'-s$ will be in the
  nullspace of $\mathcal{H}$. Clearly, $s' = s + (s' - s)$, so 
  $s'$ can be written as $s + x$ for $x$ in the nullspace of
  $\mathcal{H}$.}

\begin{claim}
  \autoref{alg:additive_projection} computes a solution to
  \autoref{eq:system_of_eq} as a byproduct.
\end{claim}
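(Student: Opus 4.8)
The plan is to make the implicit parameters of \autoref{alg:additive_projection} explicit and then verify the linear system \autoref{eq:system_of_eq} by direct substitution. Writing $N=n$ and $f_{ij}=f(t_i,v_j)$, the algorithm's intermediate quantities are $\mathrm{proj}_t(i)=\frac1n\sum_k f_{ik}$, $\mathrm{proj}_v(i)=\frac1n\sum_k f_{ki}$, and $\hat\mu=\frac{1}{n^2}\sum_{k,l}f_{kl}$. I would set $\tau_i := \mathrm{proj}_t(i)-\tfrac12\hat\mu$ and $\phi_j := \mathrm{proj}_v(j)-\tfrac12\hat\mu$, splitting the mean term symmetrically between the two unimodal components. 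By construction $\tau_i+\phi_i=\mathrm{proj}_t(i)+\mathrm{proj}_v(i)-\hat\mu=\mathrm{preds}[i]$, so these candidates reproduce exactly the values the algorithm returns for each observed pair $(t_i,v_i)$.

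Next I would check that $(\tau_1,\dots,\tau_n,\phi_1,\dots,\phi_n)$ solves \autoref{eq:system_of_eq}. The two blocks of $\mathcal{H}\cdot(\tau;\phi)$ read $n\tau_i+\sum_j\phi_j$ and $\sum_i\tau_i+n\phi_j$, with right-hand sides $\sum_k f_{ik}$ and $\sum_k f_{kj}$ respectively. Summing the definition of $\phi_j$ gives $\sum_j\phi_j = \sum_j\mathrm{proj}_v(j)-\tfrac{n}{2}\hat\mu = n\hat\mu-\tfrac{n}{2}\hat\mu=\tfrac{n}{2}\hat\mu$, using $\sum_j\mathrm{proj}_v(j)=\frac1n\sum_{k,l}f_{kl}=n\hat\mu$. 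Meanwhile $n\tau_i=\sum_k f_{ik}-\tfrac{n}{2}\hat\mu$, so the first block evaluates to $\sum_k f_{ik}$, matching the target; the second block is identical after swapping the roles of $\tau$ and $\phi$ (equivalently, of rows and columns of $f$). Hence the candidate is a genuine solution, and since $\mathcal{L}$ is convex (the preceding claim) the associated predictions are globally optimal for \autoref{eq:loss_appendix}.

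The one point requiring care --- and the natural place for an objection --- is that the algorithm never materializes the $2n$ individual parameters; it returns only the pairwise sums $\tau_i+\phi_i$. So the real content of the statement is that \emph{some} admissible split of each prediction into a textual and a visual part solves the system, and the symmetric half-and-half allocation of $\hat\mu$ is merely one convenient representative. I would close this gap by invoking the nullspace analysis already established: every solution has the form $s+cr$ with $r=\langle 1,\dots,1,-1,\dots,-1\rangle$, and adding $cr$ sends $\tau_i\mapsto\tau_i+c$ and $\phi_j\mapsto\phi_j-c$, leaving each sum $\tau_i+\phi_i$ --- and therefore $\mathrm{preds}[i]$ --- unchanged. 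Thus the returned predictions are the unique invariant of the affine solution set, which legitimizes the statement that the algorithm computes a solution as a byproduct. Finally, for the $d>1$ case I would note that both $\mathcal{L}$ and every step of \autoref{alg:additive_projection} act independently on each output coordinate, so the identical argument applies coordinatewise.
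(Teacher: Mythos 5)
Your proof is correct and takes essentially the same approach as the paper: you exhibit explicit parameter values $\tau_i, \phi_j$ whose pairwise sums are exactly the algorithm's outputs, and verify \autoref{eq:system_of_eq} by direct substitution (a computation the paper states but leaves to the reader). The only difference is the choice of representative---the paper assigns the entire $-\hat{\mu}$ correction to the $\tau_i$'s (\autoref{eq:tiopt} and \autoref{eq:vjopt}) whereas you split it evenly between $\tau_i$ and $\phi_j$; the two assignments differ by a multiple of the nullspace vector $r$, so both are valid solutions, and your closing remark about the sums being the invariant of the solution set mirrors what the paper defers to its subsequent uniqueness claim.
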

\begin{proof}
  \autoref{alg:additive_projection} computes $s = \langle \tparam, \vparam \rangle$ as:
  \begin{align}
  \tparam &= \frac{1}{n} \sum_k f_{ik} - \frac{1}{n^2} \sum_i \sum_j f_{ij} \label{eq:tiopt} \\
  \vparam &= \frac{1}{n} \sum_k f_{kj} \label{eq:vjopt} 
  \end{align}
  $s$ is a solution to \autoref{eq:system_of_eq}, as can be verified by direct substitution.
\end{proof}

\begin{claim}
  The rank of $\mathcal{H}$ is $2n-1$, which implies that its
  nullspace is 1-dimensional.
\end{claim}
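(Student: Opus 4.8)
The plan is to invoke the rank--nullity theorem: since $\mathcal{H} \in \mathbb{R}^{2n \times 2n}$, it is enough to show that $\mathrm{null}(\mathcal{H})$ has dimension exactly $1$, which then forces $\mathrm{rank}(\mathcal{H}) = 2n - 1$. One containment is already in hand: the preceding discussion verified that $\mathcal{H}r = 0$ for $r = \langle 1,\ldots,1,-1,\ldots,-1\rangle$, so $\mathrm{span}(r) \subseteq \mathrm{null}(\mathcal{H})$ and the nullity is at least $1$. It therefore remains to prove the reverse containment, i.e.\ that every null vector is a scalar multiple of $r$.

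To establish this, I would take an arbitrary $z = \langle a ; b\rangle$ with $a, b \in \mathbb{R}^n$ satisfying $\mathcal{H}z = 0$ and expand the condition block by block using $\mathcal{H} = \left[\begin{smallmatrix} nI & \mathbf{1} \\ \mathbf{1} & nI \end{smallmatrix}\right]$. The key observation (already implicit in the Hessian computation above) is that multiplying a vector by the all-ones matrix $\mathbf{1}$ replaces it by the constant vector whose every entry equals its coordinate sum. Hence the two blocks read $n\,a_i + \sum_{j} b_j = 0$ for every $i$ and $n\,b_j + \sum_i a_i = 0$ for every $j$. The first family of equations forces every $a_i$ to equal the common value $-\tfrac{1}{n}\sum_j b_j$, so $a$ is a constant vector; symmetrically, $b$ is a constant vector. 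Writing $a = c\langle 1,\ldots,1\rangle$ and substituting, a short check on the coordinate sums ($\sum_i a_i = nc$) gives $b = -c\langle 1,\ldots,1\rangle$, i.e.\ $z = c\,r$.

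This shows $\mathrm{null}(\mathcal{H}) = \mathrm{span}(r)$, so the nullity is exactly $1$ and rank--nullity yields $\mathrm{rank}(\mathcal{H}) = 2n - 1$. The only delicate point -- the closest thing to an obstacle -- is the bookkeeping that the two scalar sum-conditions are mutually \emph{consistent} rather than collapsing $z$ to $0$: one must confirm that $a = c\langle 1,\ldots,1\rangle$, $b = -c\langle 1,\ldots,1\rangle$ simultaneously satisfies both blocks (it does, since each equation reduces to $0 = 0$), leaving a genuine one-parameter family. As an alternative self-check I would note the spectral route: writing $\mathcal{H} = nI_{2n} + \left[\begin{smallmatrix}0&1\\1&0\end{smallmatrix}\right]\otimes \mathbf{1}$ and using that $\mathbf{1}$ has eigenvalues $n$ (once) and $0$ ($n-1$ times) while $\left[\begin{smallmatrix}0&1\\1&0\end{smallmatrix}\right]$ has eigenvalues $\pm 1$, the spectrum of $\mathcal{H}$ is $\{2n,\, n \text{ (mult. } 2n-2),\, 0\}$, so again exactly one zero eigenvalue gives nullity $1$. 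I would present the direct nullspace argument as the main proof, since it is fully self-contained and reuses notation already introduced for the $d=1$ case.
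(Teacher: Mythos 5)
Your proof is correct, but your main argument takes a genuinely different route from the paper's. You solve $\mathcal{H}z=0$ directly: writing $z=\langle a;b\rangle$, the block equations $na_i+\sum_j b_j=0$ and $nb_j+\sum_i a_i=0$ force $a$ and $b$ to be constant vectors, and the consistency check pins them to $a=c\mathbf{1}_n$, $b=-c\mathbf{1}_n$, so the nullspace is exactly $\mathrm{span}(r)$ and rank--nullity gives $\mathrm{rank}(\mathcal{H})=2n-1$. The paper instead argues spectrally: it exhibits the full eigenstructure of $\mathcal{H}$ --- eigenvalue $n$ on the $(2n-2)$-dimensional space of vectors whose two halves each sum to zero, eigenvalue $2n$ at the all-ones vector, and eigenvalue $0$ at $r$ --- and concludes that $0$ has multiplicity one because these eigenspaces exhaust $\mathbb{R}^{2n}$. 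Your approach is more elementary and self-contained (no eigenvalue enumeration, no dimension-counting over eigenspaces), and it proves exactly what the claim needs; the paper's approach costs the extra bookkeeping of verifying the eigenspaces fill the whole space (done in its footnote via an explicit eigenbasis) but buys the entire spectrum $\{2n,\,n\ (\text{mult. }2n-2),\,0\}$, which also re-confirms positive semi-definiteness. Notably, your secondary ``self-check'' via the decomposition $\mathcal{H}=nI_{2n}+\left[\begin{smallmatrix}0&1\\1&0\end{smallmatrix}\right]\otimes\mathbf{1}$ and Kronecker-product eigenvalue facts is essentially the paper's argument in compressed form, so you have both routes; presenting the direct nullspace computation as the primary proof is a reasonable, arguably cleaner, choice.
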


\begin{proof}
  Solutions to $\mathcal{H}x = \lambda x$ occur when:
  \begin{align*}
  \lambda &= n, \; 0 = \sum\limits_{i=1}^n x_i \text{ and } 0 =\sum\limits_{j=n+1}^{2n} x_j \\
  \lambda &= 2n, \; x = \mathbf{1} \\
  \lambda &= 0, \; x = r
  \end{align*}
    So, zero is an eigenvalue of
  $\mathcal{H}$ with multiplicity 1, which shows that $\mathcal{H}$'s rank is
  $2n-1$.\footnote{ We can make explicit the eigenbasis for the
    $\lambda=n$ solutions.  Let $M \in \mathbb{R}^{n \times (n-1)}$ be
    $I_{n-1}$ with an additional row of $-1$ concatenated as the $n^{th}$ row.
    The eigenbasis is given by the columns of
    $$
    \begin{bmatrix} M & \mathbf{0} \\ \mathbf{0} & M \end{bmatrix}.
    $$
  }
\end{proof}

Because the nullspace of $\mathcal{H}$ is 1-dimensional, all solutions to
\autoref{eq:system_of_eq} are given by $s + cr$ for any $c \in
\mathbb{R}$. Returning to the notation of the original problem, we see
that all optimal solutions are given by:
\begin{align}
  \tparam &= \frac{1}{n} \sum_k f_{ik} - \frac{1}{n^2} \sum_i \sum_j f_{ij} + c \label{eq:all_ti}\\
  \vparam &= \frac{1}{n} \sum_k f_{kj} - c \label{eq:all_vj}
\end{align}

\begin{claim}
  \autoref{alg:additive_projection} produces a unique solution
  for the values of $\hat f$.
\end{claim}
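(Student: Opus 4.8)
The plan is to exploit the complete characterization of the minimizer set that the preceding analysis already delivers. By convexity of $\mathcal{L}$ every stationary point is a global minimum, and because $\mathcal{H}$ has rank $2n-1$ with one-dimensional nullspace spanned by $r=\langle 1,\ldots,1,-1,\ldots,-1\rangle$, the family \autoref{eq:all_ti}--\autoref{eq:all_vj}, indexed by the free scalar $c\in\mathbb{R}$, is \emph{exactly} the set of all optimal parameter vectors. The crucial reframing is that the object the statement concerns is not the individual parameters $\tparam,\vparam$ — which are genuinely non-identifiable, pinned down only up to $c$ — but the fitted values $\hat f(t_i,v_j)=\tparam+\vparam$. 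So the entire claim reduces to showing that this pairwise sum is invariant as $c$ ranges over $\mathbb{R}$.

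Concretely, I would substitute \autoref{eq:all_ti} and \autoref{eq:all_vj} into $\hat f(t_i,v_j)=\tparam+\vparam$ and read off the cancellation:
\begin{equation*}
\hat f(t_i,v_j) = \left(\frac{1}{n}\sum_k f_{ik} - \frac{1}{n^2}\sum_{i,j} f_{ij} + c\right) + \left(\frac{1}{n}\sum_k f_{kj} - c\right) = \frac{1}{n}\sum_k f_{ik} + \frac{1}{n}\sum_k f_{kj} - \frac{1}{n^2}\sum_{i,j} f_{ij}.
\end{equation*}
The $+c$ and $-c$ terms cancel, so the right-hand side is independent of $c$, and this value is precisely what \autoref{alg:additive_projection} returns. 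Intuitively, the nullspace direction $r$ acts by adding a constant to every $\tparam$ while subtracting the same constant from every $\vparam$ — a pure re-gauging that shifts mass between the textual and visual components without touching any sum $\tparam+\vparam$. The non-identifiability of the parameters thus lies entirely along the one direction that leaves all predictions fixed; hence every minimizer produces the \emph{same} $\hat f$, establishing uniqueness of the fitted values.

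I do not expect a genuine computational obstacle; the delicate point is instead logical completeness, namely that I must cite both earlier facts — convexity (stationary points are global minima) and $\mathrm{rank}(\mathcal{H})=2n-1$ (so the one-parameter family exhausts all minimizers). Without the rank result I could not rule out some overlooked optimal solution whose pairwise sums differ, so the cancellation alone would be insufficient. Given those two inputs, the display above is the whole content of the proof. Finally, to dispatch the general $d>1$ case, I would note that the objective \autoref{eq:loss} decouples coordinatewise across the $d$ output dimensions, so the scalar argument applies independently to each coordinate and the uniqueness conclusion carries over verbatim.
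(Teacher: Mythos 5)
Your proposal is correct and follows essentially the same route as the paper's own proof: both rely on the previously derived parametric family \autoref{eq:all_ti}--\autoref{eq:all_vj} of all optimal solutions and observe that the algorithm outputs the sums $\tparam + \vparam$, in which the free parameter $c$ cancels, so every minimizer yields identical fitted values. Your version merely writes out the cancellation explicitly (and folds in the $d>1$ decoupling, which the paper treats in a separate paragraph), but the key idea is identical.
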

\begin{proof}
  We have shown that \autoref{alg:additive_projection} produces
  \emph{an} optimal solution, and have derived the parametric form of
  all optimal solutions in \autoref{eq:all_ti} and 
  \autoref{eq:all_vj}. Note that \autoref{alg:additive_projection}
  outputs $\tparam + \vparam$ (rather than $\tparam, \vparam$
  individually). This cancels out the free choice of $c$. Thus, any
  algorithm that outputs optimal $\tparam + \vparam$ will have the
  same output as \autoref{alg:additive_projection}.
\end{proof}

\mparagraph{Extension to multiple dimensions} So far, we have shown
that \autoref{alg:additive_projection} produces an optimal and
unique solution for \autoref{eq:loss}, but only in cases where
$f_{ij}, \tparam, \vparam \in \mathbb{R}$. In general, we need to show the
algorithm works for multi-dimensional outputs, too. The full loss
function includes a summation over dimension as:
\begin{equation}
  \mathcal{L} = \frac{1}{2} \sum_i \sum_j \sum_d (f_{ijd} - \tau_{id} - \phi_{jd})^2
  \label{eq:loss_full}
\end{equation}
This objective decouples entirely over dimension $d$,
i.e., the loss can be rewritten as:
\begin{align*}
  \frac{1}{2} \bigg( & \underbrace{\sum_{ij} (f_{ij1} - \tau_{i1} - \phi_{j1})^2}_{\mathcal{L}_1} + \\
  & \underbrace{\sum_{ij} (f_{ij2} - \tau_{i2} - \phi_{j2})^2}_{\mathcal{L}_2} + \ldots \\
  & \underbrace{\sum_{ij} (f_{ijd} - \tau_{id} - \phi_{jd})^2}_{\mathcal{L}_d} \bigg)
\end{align*}
Furthermore, notice that the parameters in $\mathcal{L}_i$ are
disjoint from the parameters in $\mathcal{L}_j$ if $i \neq j$. Thus,
to minimize the multidimensional objective in
\autoref{eq:loss_full}, it suffices to minimize the objective for
each $\mathcal{L}_i$ independently, and recombine the solutions. This
is precisely what \autoref{alg:additive_projection} does.

\section{Dataset Details and  Reproducibility Efforts}
\label{sec:dataset_details}

\subsection{I-INT, I-SEM, I-CTX}
This data is available from
\url{https://github.com/karansikka1/documentIntent_emnlp19}. We use
the same 5 random splits provided by the authors for evaluation. The
authors provide ResNet18 features, which we use for our non-LXMERT
experiments instead of EfficientNet-B4 features. After contacting the
authors, they extracted bottom-up-top-down FasterRCNN features for us,
so we were able to compare to LXMERT. State of the art performance
numbers are derived from the above github repo;  these differ slightly
from the values reported in the original paper because the github
versions are computed without image data augmentation.

\subsection{T-VIS}
This data is available from
\url{https://github.com/danielpreotiuc/text-image-relationship/}. The
raw images are not available, so we queried the Twitter API for them.
The corpus has 4472 tweets in it initially, but we were only able to
re-collect 3905 tweets (87\%) when we re-queried the API. Tweets can
be missing for a variety of reasons, e.g., the tweet being permanently
deleted, or the account's owner making the their account private at
the time of the API request. A handful of tweets were available, but
were missing images when we tried to re-collect them. This can happen
when the image is a link to an external page, and the image is deleted
from the external page.

\begin{table*}
  \resizebox{\textwidth}{!}{
\begin{tabular}{lccccccc}
  \toprule
  & I-INT & I-SEM & I-CTX & T-VIS & R-POP & T-ST1 & T-ST2 \\
  \midrule
  Metric & AUC & AUC & AUC & Weighted F1 & ACC & AUC & ACC \\
  Num Classes & 7 & 3 & 3 & 4 & 2 & 2 & 2 \\
  Setup & 5-fold & 5-fold & 5-fold & 10-fold & 15-fold & 5-fold & 5-fold  \\
  Best Interactive & Poly & Poly & Poly & LXMERT & LXMERT & LXMERT & Poly \\
  \midrule
  Original Perf. & 91.3 & 74.4 & 81.5 & 53.4 & \emph{64.2} & 75.5 & 80.9 \\
  Original \emap & 91.1 & 74.2 & 81.3 & 51.0 & \emph{64.1}& 75.9 & 80.7 \\
  DiffSeed Perf. & 91.3 & 74.5 & 81.4 & 53.2 & \emph{64.1}& 75.3 & 81.3 \\
  \midrule
  Match Orig + \emap & 95.6 & 95.9 & 97.4 & 85.5 & \emph{96.3} & 98.0 & 96.7 \\
  Match Orig + DiffSeed & 99.9 & 99.1 & \multicolumn{1}{r}{100.0} & 75.5 & \emph{87.6} & 92.4  & 97.9 \\
  \% Inst. Orig. Better & 51.2 & 52.0 & 51.5 & 55.2 & \emph{51.2} & 5/12 cases& 53.0 \\
   & & & & & & {\tiny ($\approx 6/12=50\%$)} & \\
  \bottomrule
\end{tabular}
}

  \caption{Consistency results. The first block provides details about
    the task and the model that performed best on it. The second block
    gives the performance (italicized results represent
    cross-validation \emap computation results; see
    footnote~\ref{fn:prohibitively_large}). The third block gives the
    percent of time the original model's prediction is the same as for
    \emap, and, for comparison, the percent of time the original
    model's predictions match the identical model trained with a
    different random seed: in all cases except for T-VIS, the original
    model and the \emap make the same prediction in more than 95\% of
    cases. The final row gives the percent of instances (among
    instances for which the original model and the \emap disagree)
    that the original model is correct. Except for T-VIS, when the
    \emap and the original model disagree, each is right around half
    the time.}
  \label{tab:consistency_results}
\end{table*}

\subsection{R-POP}
This data is available from
\url{http://www.cs.cornell.edu/~jhessel/cats/cats.html}.  We just use
the pics subreddit data. We attempted to rescrape the pics images from
the imgur urls. We were able to re-collect 87215/88686 of the images
(98\%). Images can be missing if they have been, e.g., deleted from
imgur. We removed any pairs with missing images from the ranking task;
we trained on 42864/44343 (97\%) of the original pairs. The data
is distributed with training/test splits. From the training set for
each split, we reserve 3K pairs for validation. The state of the art
performance numbers are taken from the original releasing work.

\subsection{T-ST1}
This data is available from
\url{http://www.ee.columbia.edu/ln/dvmm/vso/download/twitter_dataset.html}
and consists of 603 tweets (470 positive, 133 negative). The authors
distribute data with 5 folds pre-specified for cross-validation
performance reporting. However, we note that the original paper's
best model achieves 72\% accuracy in this setting, but a constant
prediction baseline achieves higher performance:  470/(470+133) =
78\%. Note that the constant prediction baseline likely performs worse
according to metrics other than accuracy, but only accuracy is
reported. We attempted to contact the authors of this study but did
not receive a reply. We also searched for additional baselines for
this dataset, but were unable to find additional work that uses this
dataset in the same fashion. Thus, given the small size of the
dataset, lack of reliable measure of SOTA performance, and label
imbalance, we decided to report ROC AUC prediction performance.

\subsection{T-ST2}

This data is available from
\url{https://www.mcrlab.net/research/mvsa-sentiment-analysis-on-multi-view-social-data/}.
We use the MVSA-Single dataset because human annotators examine both
the text and image simultaneously; we chose not to use MVSA-Multiple
because human annotators do not see the tweet's image and text at the
same time. However, the dataset download link only comes with 4870
labels, instead of the 5129 described in the original paper. We
contacted the authors of the original work about the missing data, but
did not receive a reply.

We follow the preprocessing steps detailed in
\newcite{xu2017multisentinet} to derive a training dataset.  After
preprocessing, we are left with 4041 data points, whereas prior work
compares with 4511 points after preprocessing. The preprocessing
consists of removing points that are (positive, negative), (negative,
positive), or (neutral, neutral), which we believe matches the
description of the preprocessing in that work. We contacted the
authors for details, but did not receive a reply. The state-of-the-art
performance number for this dataset is from \newcite{xu2018co}.

\section{Are \emaps just regularizers?}
\label{sec:just_regularizers}
One reason why \emaps may often offer strong performance is by acting as
a regularizer: projecting to a less expressive hypothesis space may
reduce variance/overfitting. But in many cases, the original model and
its \emap achieve similar predictive accuracy. This suggests two
explanations: \emph{either} the \emaped version makes significantly
different predictions with respect the original model (e.g., because it is
better regularized), but it happens that those differing predictions
``cancel out'' in terms of final prediction accuracy;  \emph{or}, the original, unprojected functions are quite close
to additive, anyway, and the \emap doesn't change the predictions all
that much.

We differentiate between these two hypotheses by measuring the percent
of instances for which the \emap makes a different classification
prediction than the full model. \autoref{tab:consistency_results}
gives the results: in all cases except T-VIS, the match between \emap
and the original model is above 95\%. For reference, we retrained the
best performing models with different random seeds, and measured the
performance difference under this change.

When \emap changes the predictions of the original model, does the
projection generally change to a more accurate label, or a less
accurate one? We isolate the instances where a label swap occurs, and
quantify this using: (num orig better) / (num orig better + num proj
better). In most cases, the effect of projecting improves and degrades
performance roughly equally, at least according to this metric. For
T-VIS, however, the original model is better in over 55\% of cases:
this is also reflected in the corresponding F-scores.

\section{Logic Experiment Details}
\label{sec:adaboost}

In \S\ref{sec:discussion}, we describe experiments using AdaBoost. We
chose AdaBoost \cite{freund1995desicion} to fit to the training set
because of its strong convergence guarantees. In short: if AdaBoost
can find a weak learner at each iteration (that is: if it can find a
candidate classifier with above-random performance) it will be able to
fit the training set.  A more formal statement of AdaBoost's
properties can be found in the original work.

The AdaBoost classifiers we consider use decision trees with max depth
of 15 as base estimators. We choose a relatively large depth because
we are not concerned with overfitting: we are just measuring training
fit. The additive version of AdaBoost we consider is identical to the
full AdaBoost classifier, except, at each iteration, either the image
features or the text features individually are considered.

\section{Additional Reproducibility Info}
\label{sec:reproducibility}
\begin{figure*}[ht]
  \hfill
  \includegraphics[width=.24\textwidth]{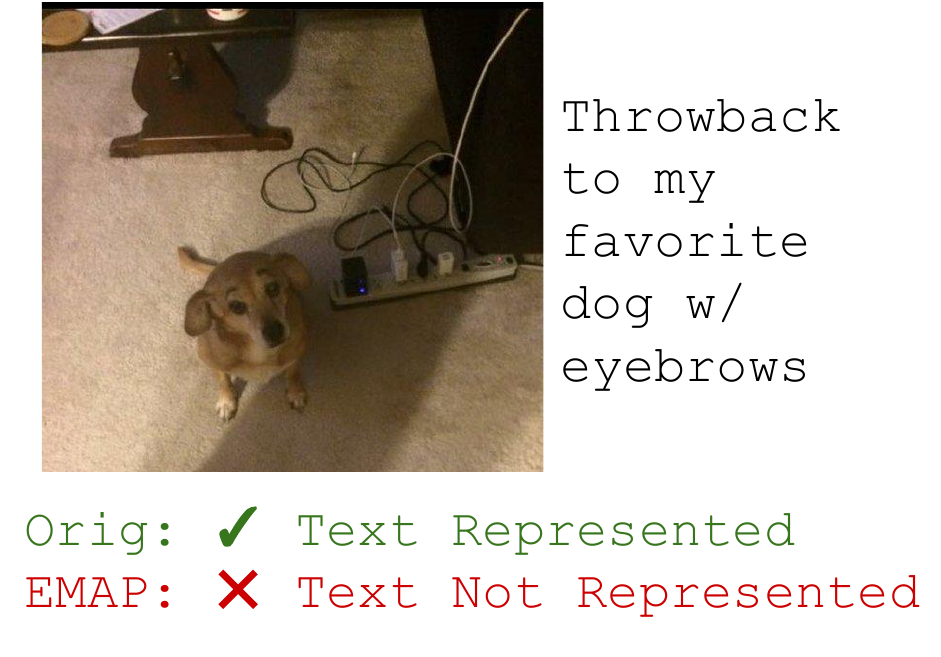}
  \hfill
  \includegraphics[width=.24\textwidth]{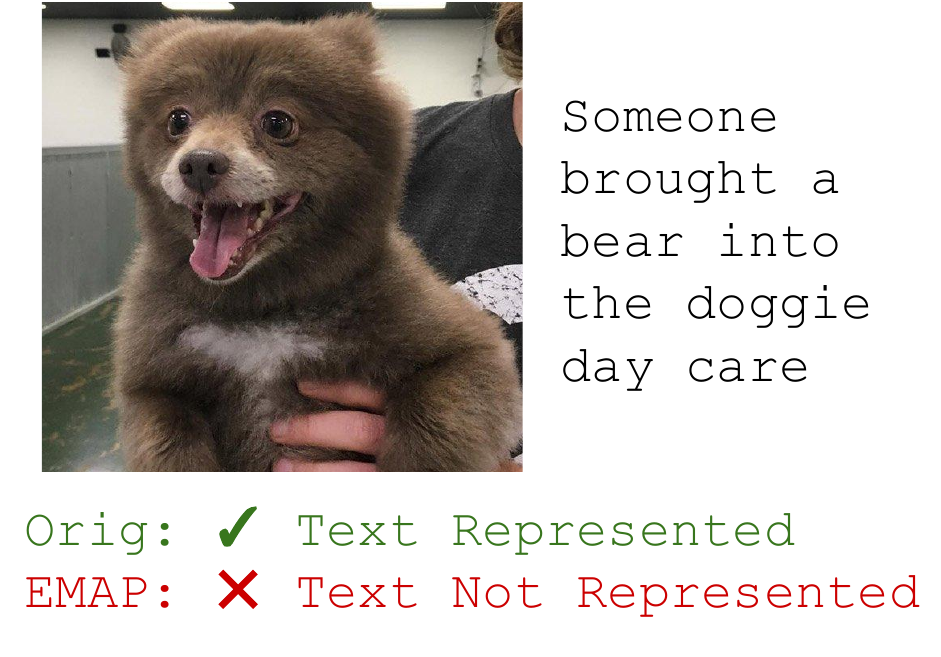}
  \hfill
  \includegraphics[width=.24\textwidth]{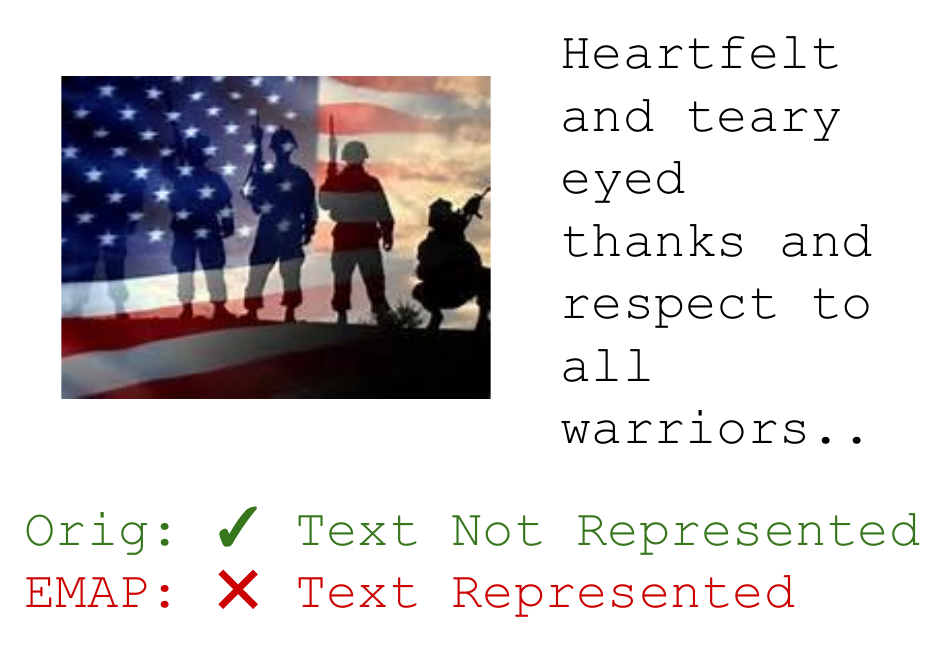}
  \hfill
  \includegraphics[width=.24\textwidth]{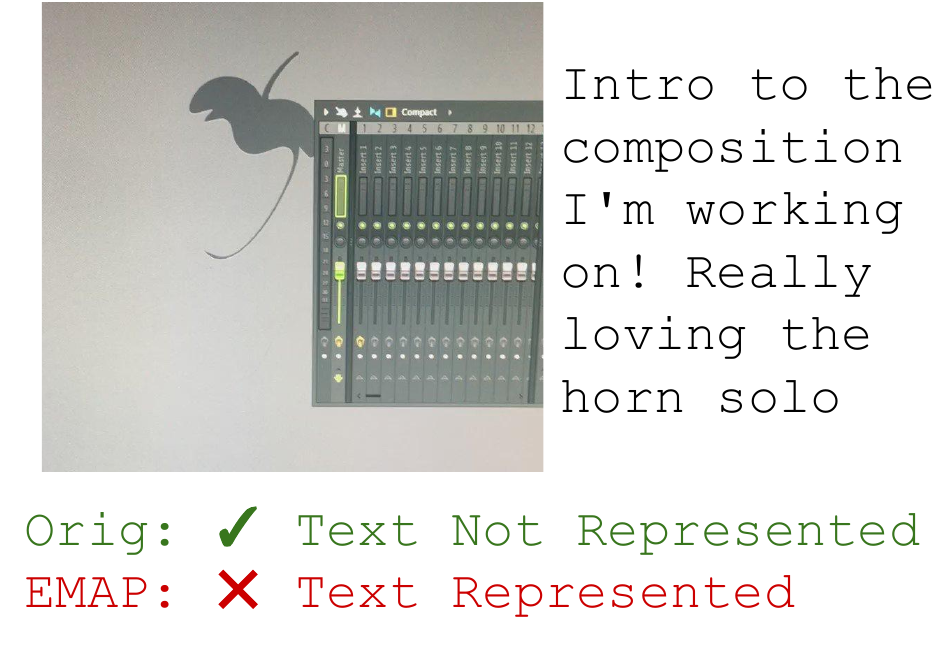}
  \hfill
  \caption{Examples of cases from for which \emap degrades the
    performance of LXMERT + Logits. All cases are labelled as ``image
    does not add meaning'' in the original corpus. Text of tweets may
    be gently modified for privacy reasons.}
  \label{fig:qual_examples}
\end{figure*}

\mparagraph{Computing Hardware} Linear models and feed-forward neural
networks were trained using consumer-level CPU/RAM configurations.
LXMERT was fine-tuned on single, consumer GPUs with 12GB of vRAM.

\mparagraph{Runtime} The slowest algorithm we used was LXMERT
\cite{tan2019lxmert}, and the biggest dataset we fine-tuned on was
R-POP. LXMERT was fine-tuned on the order of 1500 times. Depending on
the dataset, the fine-tuning process took between 10 minutes and an
hour. Overall, we estimate that we spent on the order of 50-100 GPU
days doing this work.

\mparagraph{Number of Parameters} The RoBERTa features we used were
2048-dimensional, and the EfficientNetB4 features were 1792
dimensional. The linear models and feed forward neural networks
operated on those. We cross-validated the number of layers and the
width, but the maximal model, for these cases, has on the order of
millions of parameters. The biggest model we used was LXMERT,
which has a comparable memory footprint to the original BERT Base
model.

\section{Qualitative Analysis of T-VIS}
\label{sec:qualitative_tvis}
To demonstrate the potential utility of \emap in qualitative
examinations, we identified the individual instances in T-VIS for
which \emap changes the test-time predictions of the LXMERT + Linear
Logits model. Recall that in this dataset, \emap hurts performance.

In introducing this task, \newcite{vempala2019categorizing} propose categorizing
image+text tweets into four categories: ``Some or all of the content
words in the text are represented in the image'' (or not) $\times$
``Image has additional content that represents the meaning of the text
and the image'' (or not).

As highlighted in \autoref{tab:consistency_results},
when \emap changes the prediction of the full model (14.5\% of cases),
the prediction made is  incorrect more often not: among label swapping cases, when the \emap or the original
model is correct, the original prediction is correct in 55\% of the 
cases. 

The most common label swaps of this form are between the classes:
``image does not add'' $\times$ \{``text is represented'', ``text is
not represented''\}; as shorthand for these two classes, we will write
IDTR (``image doesn't add, text represented'') and IDTN (``image
doesn't add, text not represented''). Across the 10 cross-validation
splits, \emap incorrectly maps the original model's correct prediction
of INTR $\rightarrow$ IDTN 255 times. For reference, there are 165
cases where \emap maps the \emph{incorrect} INTR prediction of the
original model to the correct IDTN label. So, when \emap makes the change
INTR $\rightarrow$ IDTN, in 60\% of cases the full model is
correct.  Similarly, \emap incorrectly maps the original model's
correct prediction of INTN $\rightarrow$ IDTR 77 times (versus 48
correct mappings, original model correct in 62\% of cases).

\autoref{fig:qual_examples} gives some instances from T-VIS where
images do not add meaning and the \emap projection causes a
swap from a correct to an incorrect prediction. While it's difficult
to draw conclusions from single instances, some preliminary patterns
emerge. There are a cluster of animal images coupled with captions
that may be somewhat difficult to map. In the case of the dog with
eyebrows, the image isn't centered on the animal, and it might be
difficult to identify the eyebrows without the prompt of the caption
(hence, interactions might be needed). Similarly, the bear-looking-dog
case is difficult: the caption doesn't explicitly mention a dog, and
the image itself depicts an animal that isn't canonically
canine-esque; thus, modeling interactions between the image and the
caption might be required to fully disambiguate the meaning.

\autoref{fig:qual_examples} also depicts two cases where the original model predicts that the text is not represented
(but \emap does).
They are drawn from a cluster of similar
cases where the captions seem indirectly connected to the image. 
Consider the $4^{{\rm th}}$, music composition
example: just looking at the text, one could envision a more literal
manifestation: i.e., a person playing a horn. Similarly, looking just
at the screenshot of the music production software, one could envision
a more literal caption, e.g., ``producing a new song with this software.''
But, the real connection is less direct, and might require additional
cross-modal inferences. Other common cases of INTR $\rightarrow$ IDTN
are ``happy birthday'' messages coupled with images of their intended
recipients and selfies taken at events (e.g., sports games), with
descriptions (but not direct visual depictions).

\section{Worked Example of \emap}
\label{sec:walkthrough}
We adopt the notation of \autoref{sec:theoretical} and give an
concrete worked example of \emap. Consider the following $f$
output values, which are computed on three image+text pairs for a binary
classification task. We assume that $f$ outputs an un-normalized logit
that can be passed to a scaling function like the logistic function
$\sigma$ for a probability estimate over the binary outcome, e.g.,
$f_{11} = -1.3$ and $\sigma(f_{11}) = P(y=1) \approx .21$.

\begin{tabular}{ccc}
  $f_{11} = -1.3$ & $f_{12} = .3$ & $f_{13} = -.2$ \\
  $f_{21} = .8$ & $f_{22} = 3$ & $f_{23} = 1.1$\\
  $f_{31} = 1.1$ & $f_{32} = -.1$ & $f_{33} = .7$
\end{tabular}

\noindent We can write this equivalently in matrix form:
$$
\begin{bmatrix}
  f_{11} & f_{12} & f_{13} \\
  f_{21} & f_{22} & f_{23} \\
  f_{31} & f_{32} & f_{33} 
\end{bmatrix} =
\begin{bmatrix}
  -1.3 & 0.3 & -0.2 \\
  0.8 & 3.0 & 1.1\\
  1.1 & -.1 & 0.7
\end{bmatrix} 
$$ Note that the mean logit predicted by $f$ over these 3 examples is
$.6$. We use \autoref{eq:vjopt} to compute the 
$\vparam$s; this is equivalent to taking a column-wise mean of this
matrix, which yields (approximately) $[.2, 1.07, .53]$. Similarly, we
can use \autoref{eq:tiopt}, equivalent to taking a row-wise mean
of this matrix, which yields (approximately) $[-.4, 1.63, .57]$, and
then subtract the overall mean $.6$ to achieve $[-1, 1.03,
  -.03]$. Finally, we can sum these two results to compute $[\hat
  f_{11}, \hat f_{22}, \hat f_{33}] = [-.8, 2.1, .5]$. These
predictions are the closest approximations to the full evaluations
$[f_{11}, f_{22}, f_{33}] = [-1.3, 3, .7]$ for which the generation function
obeys the additivity
constraint over the three input pairs.

\putbib[refs]
\end{bibunit}

\end{document}